\title{Efficient differentially private learning improves drug sensitivity prediction}
\author{Antti Honkela$^{1,2,3,}$\footnote{These authors contributed equally to this work.}~$^{,}$\footnote{These authors jointly supervised the work.}~,
  Mrinal Das$^{4,*}$,
  Arttu Nieminen$^1$,
  Onur Dikmen$^1$ and
  Samuel Kaski$^{4,\dagger}$\\
  $^1$ Helsinki Institute for Information Technology HIIT, Department
  of Computer Science, University of Helsinki, Finland \\
  $^2$ Department of Mathematics and Statistics,
  University of Helsinki, Finland \\
  $^3$ Department of Public Health, University of Helsinki, Finland \\
  $^4$ Helsinki Institute for Information Technology HIIT, Department
  of Computer Science, Aalto University, Finland}
\date{}
\newcommand{\mechanism}{\mathcal{M}}
\newcommand{\dataset}{\mathcal{D}}
\newcommand{\R}{\mathbb{R}}
\newcommand{\mbb}{\mathbb}
\newtheorem{definition}{Definition}
\newtheorem{theorem}{Theorem}
\begin{document}

\maketitle

\begin{abstract}
Users of a personalised recommendation system face a dilemma:
recommendations can be improved by learning from data, but only if the
other users are willing to share their private information. Good
personalised predictions are vitally important in precision medicine,
but genomic information on which the predictions are based is also
particularly sensitive, as it directly identifies the patients and
hence cannot easily be anonymised. Differential
privacy~\cite{Dwork2006,Dwork2014} has emerged as a potentially promising solution:
privacy is considered sufficient if presence of individual patients
cannot be distinguished. However, differentially private learning with
current methods does not improve
predictions with feasible data sizes and dimensionalities~\cite{Fredrikson2014}. Here we
show that useful predictors can be learned under powerful
differential privacy guarantees, and even from moderately-sized data
sets, by demonstrating significant improvements with a new robust
private regression method in the accuracy of private drug
sensitivity prediction~\cite{Costello2014}. The method combines two key
properties not present even in recent proposals~\cite{Wu2015,Foulds2016}, 
which can be generalised to other predictors: we prove it is asymptotically
consistently and efficiently private, and demonstrate that it performs
well on finite data. Good finite data performance is achieved by
limiting the sharing of private information by decreasing the
dimensionality and by projecting
outliers to fit tighter bounds, therefore needing to add
less noise for equal privacy.
As already the simple-to-implement method
shows promise on the challenging genomic data, we anticipate rapid
progress towards practical applications in many fields, such as mobile
sensing and social media, in addition to the badly needed precision
medicine solutions.
\end{abstract}

\section{Introduction}

The widespread collection of private data, both by individuals and
hospitals in the health domain, creates a major opportunity to develop
new services by learning predictive models from the data. Privacy-preserving
algorithms are required and have been proposed, but for instance 
anonymisation approaches~\cite{Bayardo2005,Machanavajjhala2007,Li2007}
cannot guarantee privacy against
adversaries with additional side information, and are poorly
suited for genomic data where the entire data is
identifying~\cite{Gymrek2013}. Guarantees of differential
privacy~\cite{Dwork2006,Dwork2014} remain valid even under these
conditions~\cite{Dwork2014}, and differential privacy has arisen as the
most popularly studied strong privacy mechanism for learning from data.

\section{Efficient differentially private learning}

Differential privacy~\cite{Dwork2006,Dwork2014} is a formulation of
reasonable privacy guarantees for privacy-preserving computation.
It gives guarantees about the output of a computation and can be
combined with complementary cryptographic approaches such as
homomorphic encryption~\cite{Gentry2009} if the computation process
needs protection too. An
algorithm $\mechanism$ operating on a data set $\dataset$ is said to
be \emph{differentially private} if for any two data sets $\dataset$
and $\dataset'$, differing only by one sample, the ratio of probabilities
of obtaining any specific result $c$ is bounded as
\begin{equation}
  \label{eq:dp_condition}
  \frac{p(\mechanism(\dataset) = c)}{p(\mechanism(\dataset') = c)}
  \le \exp(\epsilon).
\end{equation}
Because of symmetry between $\dataset$ and $\dataset'$ the
probabilities need to be similar to satisfy the condition.
Differential privacy is preserved in post-processing, which makes it
flexible to use in complex algorithms.
The $\epsilon$ is a privacy parameter interpretable as a privacy
budget, with higher values corresponding to less privacy preservation.
Differentially private learning algorithms are usually based on
perturbing either the input~\cite{Blum2005,Dwork2006},
output~\cite{Dwork2006,Wu2015} or the
objective~\cite{Chaudhuri2008,Zhang2012}.

Here we apply differential privacy to regression. The aim is to
learn a model to predict the scalar target $y_i$ from
$d$-dimensional inputs $x_i$ (Fig.~\ref{fig:illustration}a) as $y_i =
f(x_i) + \eta_i$, where $f$ is 
an unknown mapping and $\eta_i$ represents noise and modelling error.
We wish to design a suitable structure for $f$ and a differentially
private mechanism for efficiently learning an accurate private $f$ from a
data set $\dataset = \{(x_i, y_i)\}_{i=1}^n$.

We argue that a practical differentially private algorithm needs to
combine two things: (i) it needs to provide \emph{asymptotically
  efficiently private estimators} so that the excess loss incurred from
preserving privacy will diminish as the number of samples $n$ in the
data set increases; (ii) it needs to \emph{perform well on
  moderately-sized data}.

While the first requirement of asymptotic efficiency or consistency
seems obvious, it
is non-trivial to implement in practice and rules out some mechanisms
published even quite recently~\cite{Zhang2016}.  The requirement was
addressed in the Bayesian setting very recently~\cite{Foulds2016}, but
the method failed to cover the second equally important criterion.
Asymptotically consistently private methods always allow reaching stronger
privacy with more samples.

It is difficult to prove optimality of a method on finite data so good
performance needs to be demonstrated empirically. A design strategy
for good methods controls the amount of shared private information.
This has two components: (a) dimensionality needs to be reduced, to avoid
the inherent incompatibility of privacy and high dimensionality which
has been discussed previously~\cite{Duchi2014}, and (b) introducing
robustness by bounding and transforming each variable
(feature) to a tighter interval. Controlling
the amount of shared information also introduces a trade-off: compared
to the non-private setting, decreasing the dimensionality a lot may
degrade the performance of the non-private approach, while a
corresponding low-dimensional private algorithm may attain higher
performance than a higher-dimensional one (see the results and
Fig.~\ref{fig:tensor}a).

The essence of differential privacy is to inject a sufficient amount
of noise to mask the differences between the computation results
obtained from neighbouring data sets (differing by only one entry). The
definition depends on the worst-case behaviour, which implies that
suitably limiting the space of allowed results will reduce the amount
of noise needed and potentially improve the results.  In the output
perturbation framework this can be achieved by bounding the possible
outputs~\cite{Wu2015}.

Here we propose a more powerful approach of bounding the data by
projecting outliers to tighter bounds. The current
standard practice in private learning is to linearly transform the
data to desired
bounds~\cite{Zhang2012}. This is clearly sub-optimal as a few outliers
can force a very small scale for other points. Significantly higher
signal-to-privacy-noise ratio can be achieved by setting the bounds to
cover the essential variation in the data and projecting the outliers
separately inside these bounds. This approach also robustifies the
analysis against outliers as the projection can be made independent of
the outlier scale. In linear regression we call the resulting model
\emph{robust private linear regression}. It is illustrated in
Fig.~\ref{fig:illustration}b, c.

\begin{figure}
  \centering
  \raisebox{67mm}{\textbf{a}}
  \includegraphics{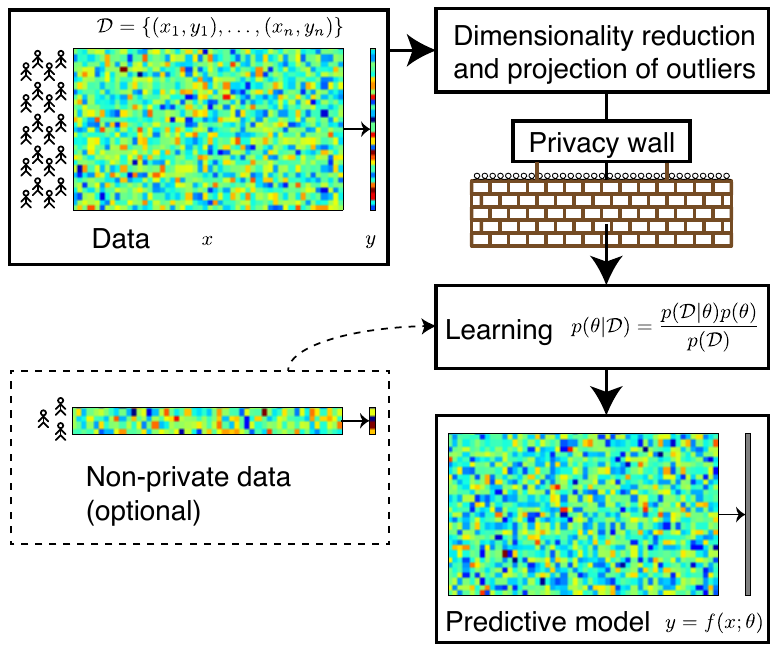}\\
  \raisebox{27mm}{\textbf{c}}\hspace{-3mm}
  \raisebox{57mm}{\textbf{b}}
  \includegraphics[trim=6mm 6mm 8mm 6mm,clip]{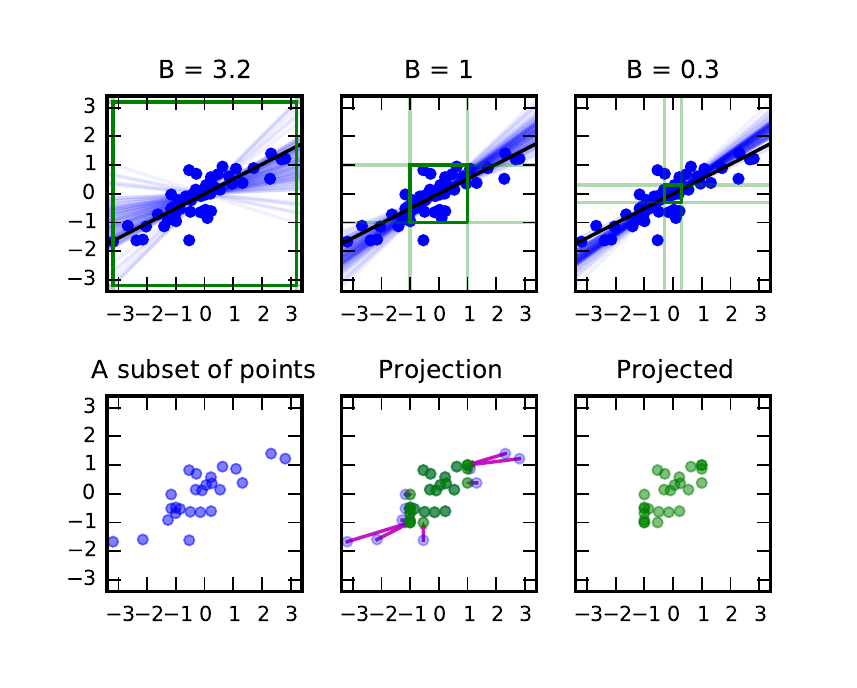}
  \caption{\textbf{Differentially private learning of a predictive
      model}. \textbf{a}, The modelling setup; most data (top) are
    available for learning only if their privacy can be
    protected. \textbf{b}, Bounding the data increasingly tightly (B;
    green square) brings 1D robust private linear regression models (blue lines
    illustrating the distribution of results of the randomised algorithm)
    closer to the non-private model (black line) as less noise needs to
    be injected. Blue points: data. \textbf{c}, The data are bounded
    in robust private linear regression
    by projecting outliers within the bounds (shown only for a subset
    of the points).}
  \label{fig:illustration}
\end{figure}

\section{Results}

Genomics is an important domain for privacy-aware modelling, in
particular for precision medicine. Many people wish to keep their and
also their relatives' genomes private~\cite{Naveed2015}, and simple
anonymisation is not sufficient to protect privacy since a genome is
inherently identifiable~\cite{Gymrek2013}. Furthermore, individual
genomes can be recovered from summary statistics~\cite{Homer2008} as
well as phenotype data such as gene expression
data~\cite{Harmanci2016}. On the other hand, previous research has
shown that poorly implemented private models may
put a patient to severe risk~\cite{Fredrikson2014}. 

We apply the robust private linear regression model to predict drug
sensitivity given gene expression data, in a setup where a small
internal data set can be complemented by a larger set only available
under privacy protection (Fig.~\ref{fig:illustration}a). We use data
from the Genomics of Drug Sensitivity in Cancer (GDSC)
project~\cite{Yang2013}, and the setting and evaluation are similar as
in the recent DREAM-NCI drug sensitivity prediction
challenge~\cite{Costello2014}. The sensitivity of each drug is
predicted with Bayesian linear regression based on expression of known
cancer genes identified by the GDSC project~\cite{Yang2013} to limit
the dimensionality. 
We achieve differential privacy
by injecting noise to the sufficient statistics computed from
the data, using the Laplace mechanism~\cite{Dwork2006}.
Full details are presented in Methods.

\begin{figure}
\centering
\centerline{\includegraphics{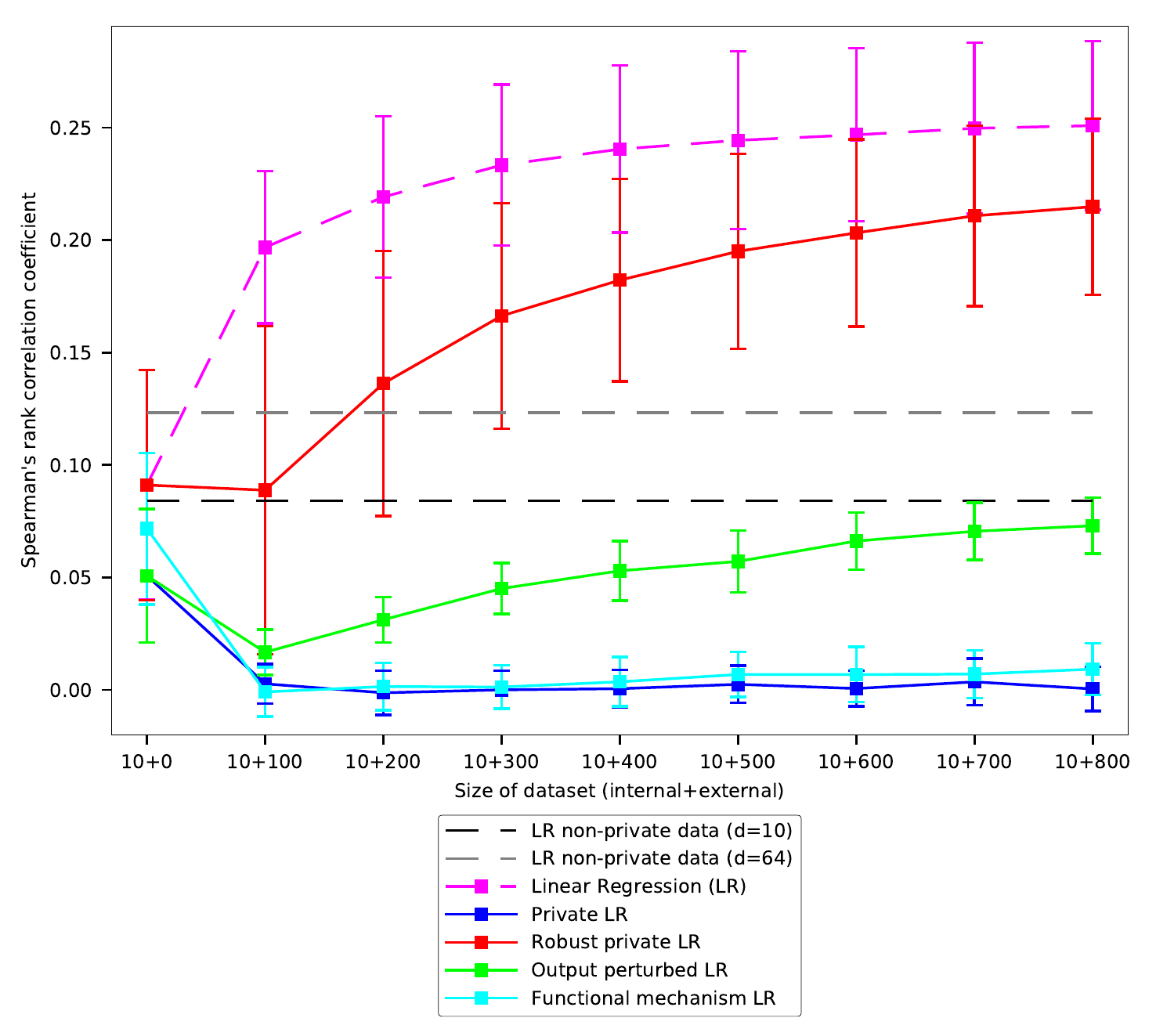}}
\caption{\textbf{Accuracy of drug sensitivity prediction in terms of
    Spearman's rank correlation coefficient over ranking
    cell lines by sensitivity to a drug (higher is
    better) increases with size of private data for the proposed
    robust private linear regression.} The state-of-the-art methods
  fail to utilise private data under strict privacy
  conditions. The baselines (horizontal dashed lines) are learned
  on 10 non-private
  data points; the private algorithms additionally have
  privacy-protected data (x-axis). The non-private algorithm (LR)
  has the same amount of additional non-privacy-protected data.
  All methods use 10-dimensional data except purple baseline
  showing the best performance with 10 non-private data points.
  Private methods use $\epsilon=2$, corresponding results for
  $\epsilon=1$ are in Fig.~\ref{fig:curves_eps1}. The results are averaged over all
  drugs and 50-fold Monte Carlo cross-validation; error bars denote
  standard deviation over 50 Monte Carlo repeats. (See Methods for details.)}
\label{fig:curves}
\end{figure}

Unlike with previous approaches, now prediction accuracy (ranking of
new cell lines~\cite{Costello2014} to sensitive vs insensitive
measured by Spearman's rank correlation;
Fig.~\ref{fig:curves}) improves when more privacy protected data
is received. The proposed non-linear projection of the data
to tighter bounds 
is the key to this success, as without it the
method performs as poorly as the earlier ones.

\begin{figure}
\centering 
\centerline{\includegraphics{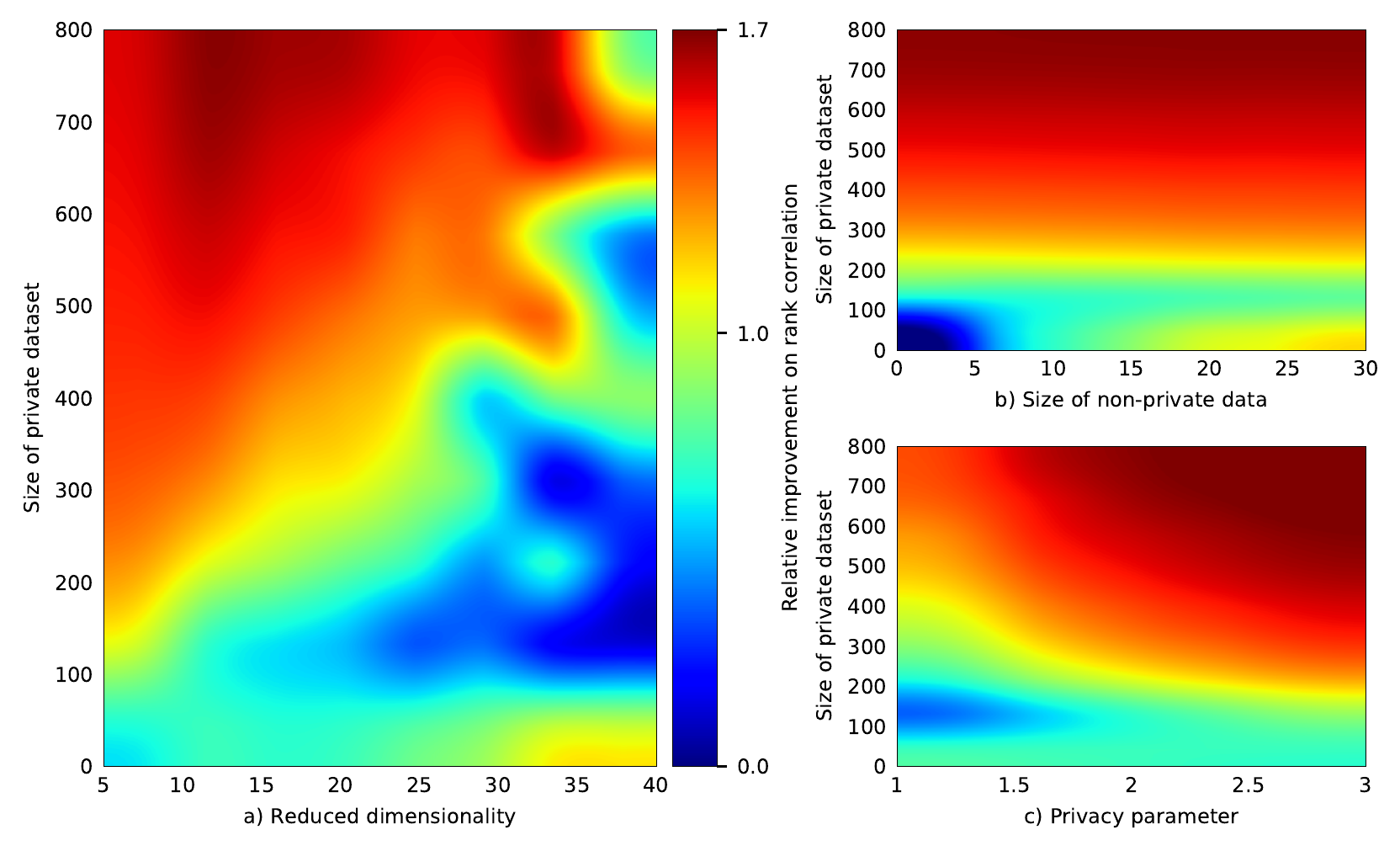}}
 \caption{\textbf{Key trade-offs in differentially private learning.}
 Relative improvements over baseline (10 non-private data points).
   \textbf{a}, As the dimensionality increases, the models without private
   data improve whereas more data are needed to
   improve performance of the private methods. \textbf{b}, With enough private data, adding
   more non-private data does not significantly increase the
   performance. \textbf{c}, More data are needed if privacy guarantees
   are tighter ($\epsilon$ is smaller). Size of non-private data is 10
   and $\epsilon=2$ (except when otherwise noted).
}
 \label{fig:tensor}
\end{figure}

To improve prediction performance in differentially private learning,
trade-offs need to be made between dimensionality and amount of data
(Fig.~\ref{fig:tensor}a), and between strength of privacy guarantees
and amount of data (Fig.~\ref{fig:tensor}c), but the amount of
optional non-private data matters significantly only when there is
very little private data (Fig.~\ref{fig:tensor}b).

In Secs.~\ref{sec:theor-backgr}--\ref{sec:diff-priv-line} in the 
Supplementary Information we define asymptotic consistency
and efficiency of private estimators relative to non-private ones
and prove that the optimal convergence rate of differentially private
Bayesian estimators to the corresponding non-private ones is
$\mathcal{O}(1/n)$ for $n$ samples, which is matched by our method.
Unlike existing approaches~\cite{Smith2008,Wasserman2010,Wang2015a}, we
compare the private estimators to the corresponding non-private ones, making the
theory more easily accessible and more broadly applicable.

Robust private linear regression treats non-private and scrambled
private data similarly in the model learning.
An interesting next step for further improving the accuracy on very
small private data would be to give a different weight to the clean
and privacy-scrambled data by incorporating knowledge of the injected
noise in the Bayesian inference, as has been proposed for generative
models~\cite{Williams2010}, but which is non-trivial in regression.

\section{Methods}

  \subsection{Linear regression model}

  The Bayesian linear regression model for scalar target
  $y_i$, with $d$-dimensional input $x_i$ and fixed noise precision
  $\lambda$, is defined by
  \begin{align}\label{eq:rr}
    y_i | x_i &\sim N(x_i^T \beta, \lambda) \nonumber \\
    \beta &\sim N(0, \lambda_0 I) ,
  \end{align}
  where $\beta$ is the unknown parameter to be learnt.
  The $\lambda$ and
  $\lambda_0$ are the precision parameters of the corresponding
  Gaussian distributions, and act as regularisers.

  Given an observed data set $\dataset = \{(x_i, y_i)\}_{i=1}^n$
  with sufficient statistics $n\widebar{xx} = \sum_{i=1}^n x_i x_i^T$
  and $n\widebar{xy} = \sum_{i=1}^n x_i y_i$, the posterior
  distribution of $\beta$ is Gaussian, $p(\beta | \dataset) =
  N(\beta;\; \mu_{*}, \Lambda_{*})$, with precision
\begin{equation}\label{eq:prec}  
  \Lambda_{*} = \lambda_0 I + \lambda n\widebar{xx}
\end{equation}
  and mean
\begin{equation}\label{eq:mean}  
  \mu_{*} = \Lambda_{*}^{-1} (\lambda n\widebar{xy})
\end{equation}

After learning with the training data set, the prediction of $y_i$
using $x_i$ is computed as follows:
\begin{equation}
\hat{y}_i = x_i^T \mu_*.
\end{equation}

A more robust alternative is to define prior distributions for the
precision parameters. In our case, a Gamma prior is assigned for both:
\begin{align}\label{eq:prior}
\lambda &\sim \mathrm{Gamma}(a,b)\nonumber\\
\lambda_0 &\sim \mathrm{Gamma}(a_0,b_0).
\end{align}

The posterior can be sampled using computational methods such as 
automatic differentation variational inference (ADVI) \cite{Kucukelbir2017}
where we fit a variational distribution to the posterior.
The precision parameters and correlation coefficients $\beta$ are then
sampled from the fitted distribution.
For this purpose, the data likelihood in
Eq.~\eqref{eq:rr} needs to be expressed in terms of the sufficient
statistics $n\widebar{xx}$, $n\widebar{xy}$, and
$n\widebar{yy}=\sum_{i=1}^n y_i^2$, which results in
\begin{multline}\label{eq:lh}
p(y|X,\beta,\lambda)=\left(\frac{\lambda}{2\pi}\right)^{n/2} \exp\left( -\frac{\lambda}{2}(\beta^T n\widebar{xx}\beta - 2\beta^T n\widebar{xy} + n\widebar{yy})\right).\\
\end{multline}
The prediction of $y_i$ is computed using $x_i$ and averaging over a
sufficiently large number $m$ of sampled regression coefficients
$\beta^{(k)}$ as
\begin{equation}\label{eq:predmcmc}
\hat{y}_i = \int p(y|\beta,X_{\mathrm{test},i})p(\beta|\dataset_{\mathrm{train}})d\beta \approx \frac{1}{m} \sum_{k=1}^m x_{\mathrm{test},i}^T \beta^{(k)}.
\end{equation}

For evaluation we keep a part of the data set $\dataset$ aside (not
used for training) and after predicting $\hat{y}_i$, we evaluate the
error between the actual $y_i$ and $\hat{y}_i$. In this paper, we
do this using Spearman's rank correlation coefficient to evaluate how
well the predictions separate sensitive and insensitive cell lines.

  \subsection{Differential privacy and efficiency}
  We apply differential privacy as defined in Eq.~\eqref{eq:dp_condition}.
  We use \emph{bounded differential privacy}, where two data sets are
  considered neighbouring if they contain the same number of elements
  $n$ with $n-1$ equal elements. Compared to the other common
  alternative of \emph{unbounded differential privacy}, in which two data
  sets are considered neighbouring if one is obtained from the other
  by adding or removing an element, bounded differential privacy makes
  it clear that the number of samples is not private which simplifies
  parameter tuning. The privacy parameter $\epsilon$ values are not
  directly comparable between the two formalisms, although an
  $\epsilon=k$ unbounded differentially private mechanism is always a
  $\epsilon=2k$ bounded differentially private mechanism.

  We define a private parameter estimation mechanism to be
  \emph{asymptotically consistently private}, if the private estimate converges
  in probability to the corresponding non-private estimate as the
  number of samples increases.  We show that the optimal rate of
  convergence of the private estimate to the corresponding non-private Bayesian
  estimate is $\mathcal{O}(1/n)$.  Mechanisms reaching this
  convergence rate are called \emph{asymptotically efficiently private}.
  A mechanism for estimating a model is called
  asymptotically consistently private with respect to a
  utility function if the utility of the private model converges in
  probability to the utility of the corresponding non-private model.
  For full detail of these definitions see Supplementary Information sections 1.1-1.2.

  \subsection{Robust private linear regression}
  The robust private linear regression is based on perturbing
  the sufficient statistics $n\widebar{xx} = \sum_{i=1}^n x_i x_i^T$,
  $n\widebar{xy} = \sum_{i=1}^n x_i y_i$, and $n\widebar{yy}=\sum_{i=1}^n y_i^2$.
  We use independent
  $p_i\epsilon$-differentially private Laplace mechanisms~\cite{Dwork2006} for
  perturbing each statistic with $\epsilon_i = p_i\epsilon$ for each $i=1,2,3$
  and $p_1+p_2+p_3=1$. Together, they provide an
  $\epsilon$-differentially private mechanism.

  We project the outliers in the private data sets to fit the data in the interval $[-B_*, B_*]$ as
  \begin{align} \label{eq:clip}
  x_{ij} &= \max(-B_x, \min(x_{ij},B_x)) \nonumber \\
  y_i &= \max(-B_y, \min(y_i,B_y)).
  \end{align}
  After the projection, $\| x_i \|_{\infty} \le
  B_x$ and $| y_i | \le B_y$, and we add noise to $n\widebar{xx}$ distributed
  as $\mathrm{Laplace}(0, b_{xx})$,
  to $n\widebar{xy} = \sum_{i=1}^n x_i y_i$ distributed as
  $\mathrm{Laplace}(0,b_{xy})$, and to $n\widebar{yy} = \sum_{i=1}^n y_i^2$
  distributed as $\mathrm{Laplace}(0,b_{yy})$,  
  where the scale parameters are
  $b_{xx}=\frac{d(d+1)B_x^2}{p_1\epsilon}$,
  $b_{xy}=\frac{2dB_xB_y}{p_2\epsilon}$, and
  $b_{yy}=\frac{B_y^2}{p_3\epsilon}$.
  This generalises earlier work on bounded variables~\cite{Foulds2016}
  to the unbounded case by introducing the projection.  Proof that
  this yields a valid asymptotically consistent and efficient
  differentially private mechanism is given in Supplementary Information
  section 2.
  We also show that a similar algorithm, applied to the estimation
  of a Gaussian mean, leads to an asymptotically consistent and
  efficient private
  estimate of the posterior mean, while the simpler input perturbation
  that perturbs the entire data set is not asymptotically consistently
  private.

  The privacy budget proportions $p_1,p_2,p_3$ and projection thresholds $B_x$, $B_y$ are important parameters for
  good model performance. As illustrated in Fig.~\ref{fig:clipping_effect},
  the projection thresholds depend strongly on the size of the data set. We
  propose finding the optimal parameter values on an auxiliary
  synthetic data set of the same size, which was found to be
  effective in our case.  We generate the auxiliary data set of $n$
  samples using a generative model similar to the one specified in
  Eq.~\eqref{eq:rr}:
  \begin{align}
   x_i & \sim N(0, I_d) \nonumber \\
   y_i | x_i &\sim N(x_i^T \beta, \lambda) \nonumber \\
   \beta & \sim N(0, \lambda_0 I),
  \end{align}
  where $d$ is the dimension. 
  
  First we find the optimal budget split $p_1,p_2,p_3$.
  For all possible combinations of $(p_1,p_2,p_3) \in \{0.05,0.1, \ldots ,
  0.90\}^3$, where $p_1 + p_2 + p_3 = 1$, we project the data using
  clipping thresholds 
  for the current split, and we
  perturb the sufficient statistics according to the current budget
  split. We compute the prediction 
  as in Eq.~\eqref{eq:predmcmc} using samples drawn from the variational distribution
  fitted with ADVI and compute the error with
  respect to the original values. The error measure we use is Spearman's
  rank correlation between the original and predicted values. The split $(p_1,p_2,p_3)$
  which gives the minimum error is used in all test settings. 
  As illustrated in Fig.~\ref{fig:budgetsplit}, in our experiments the optimal split 
  gives the largest proportion of the privacy budget to the term
  $n\widebar{xy}$ (60\%), the second largest proportion to the term
  $n\widebar{xx}$ (35\%), and the smallest possible proportion to the
  term $n\widebar{yy}$ (5\%).    
  
  We parameterise the projection thresholds
  as a function of the data standard deviation as
\begin{eqnarray}\label{eq:set_B}
&B_x= \omega_x \sigma_x,\quad  B_y= \omega_y \sigma_y  \\
&\omega_x,\omega_y \in \{0.1\omega\}_{\omega=1}^{20},
\end{eqnarray}
where the $\sigma_x$ and $\sigma_y$ are the standard deviations of $x$
(considering all dimensions) and $y$, respectively.
With all 400 pairs of $(B_x,B_y)$ as specified above, we apply the outlier
projection method of Eq.~\eqref{eq:clip}. We perturb the sufficient statistics 
according to the chosen optimal privacy budget split and fit the model as in 
Eq.~\eqref{eq:prec},\ref{eq:mean} using the
projected values and then compute the error with respect to the original
values. The pair of $(\omega_x,\omega_y)$ which
gives the minimum error is used to define the $(B_x,B_y)$ for the real data as in Eq.~\eqref{eq:set_B}. As the error we used Spearman's rank
correlation between original $y_{1:n}$ and predicted $\bar{y}_{1:n}$
based on the model learnt with projected values.

  \subsection{Data and pre-processing}

  We used the gene expression and drug sensitivity data from the
  \emph{Genomics of Drug Sensitivity in Cancer} (GDSC)
  project~\cite{Yang2013,Garnett2012} (release 6.1, March 2017,
  \url{http://www.cancerrxgene.org}) consisting of 265 drugs and a
  panel of 985 human cancer cell lines. The dimensionality of the RMA-normalised
  gene expression data was reduced from $d=17490$ down to 64
  based on prior knowledge about genes that are frequently mutated
  in cancer, provided by the GDSC project at \url{http://www.cancerrxgene.org/translation/Gene}.
  We further ordered the genes based on their mutation counts as
  reported at \url{http://cancer.sanger.ac.uk/cosmic/curation}.
  Drug responses were quantified by
  log-transformed IC50 values (the drug concentration yielding 50\%
  response) from the dose response data measured at 9 different
  concentrations.  The mean was first removed from each gene, 
  $x_{ij} := x_{ij}-\mathrm{mean}(x_{i:n,j})$, 
  and each data point was normalised to have L2-norm
  $\|x_i\|_2 = 1$, which focuses the analysis on relative expression
  of the selected genes, and equalises the contribution of each data
  point. The mean was removed from drug sensitivities, $y_i :=
  y_i-\mathrm{mean}(y_{1:n})$. Data with missing drug responses were
  ignored, making the number of cell lines different across different
  drugs.
  
  \subsection{Experimental setup}

  We carried out a 50-fold Monte Carlo cross-validation process for different splits of the data set into train and test using different
  random seeds. For each repeat, we randomly split the 985 cell lines
  to 100 for testing and the rest for training.
  We further randomly partitioned the training set to 30 non-private cell lines and used the
  rest as the private data set. In the experiments, we tested non-private
  data sizes from 0 to 30, and private data sizes from 100 to 800.
  The hyperparameters for the
  Gamma priors of precision parameters $\lambda,\lambda_0$ in Eq.~\eqref{eq:prior}
  were set to $a = b = a_0 = b_0 = 2$. The Gamma(2,2)
  distribution has mean 1 and variance 1/2 and defines
  a realistic distribution over sensible values of precision
  parameters which should be larger than zero. 
  We implemented the model and carried out the inference with 
  the PyMC3 Python module \cite{Salvatier2016}.
  Using ADVI, we fitted a normal distribution with uncorrelated variables 
  to the posterior distribution. We computed the drug response predictions
  using $m=5000$ samples from the fitted variational distribution.
  We used ADVI because it gives similar results as Hamiltonian Monte 
  Carlo sampling but significantly faster.
  The optimal privacy budget split was based on prediction
  performance averaged over five auxiliary data sets of 500 synthetic samples 
  (approximately half of the GDSC data set size) and five generated
  noise samples, and for each split, the optimal projection thresholds
  were chosen similarly based on average performance over five auxiliary
  data sets and five noise samples.  The prediction for each split was
  computed using $m=5000$ samples drawn from the variational distribution 
  fitted with ADVI. The final optimal
  projection thresholds for each test case were chosen using the
  optimal budget split and based on average prediction performance over
  20 auxiliary data sets and 20 noise samples.
  All auxiliary data sets were generated by fixing the precision 
  parameter values to the prior means, $\lambda=\lambda_0=1$. 
  The prediction for each pair of projection thresholds was also 
  computed using fixed precision parameters as in Eq.~\eqref{eq:prec} 
  and Eq.~\eqref{eq:mean}, as generating samples from the fitted 
  variational distribution for all test cases would have been
  infeasible in practice.

  \subsection{Alternative methods used in comparisons}

  We compared five models: (i) linear regression (LR) as defined in
  Eq.~\eqref{eq:rr}, (ii) robust private LR is the proposed method, and
  (iii) private LR is the proposed method without projection of the
  outliers, (iv) output perturbed LR~\cite{Wu2015}, and (v) functional
  mechanism LR~\cite{Zhang2012}.
  Output perturbed LR learns parameters
  $\beta$ using the same LR model in Eq.~\eqref{eq:rr}, but instead of
  statistics the parameters are perturbed, in a data-independent
  manner. Our implementation of output perturbed LR makes use of
  minConf optimisation package~\cite{Schmidt2009}.
  For functional mechanism LR we used the code publicly available at
  \url{https://sourceforge.net/projects/functionalmecha/}.

  \subsection{Alternative interpretation: transformed linear regression}

  The outlier projection mechanism can also be interpreted to produce a transformed
  linear regression problem,
  \begin{equation}
    \label{eq:transformationeq}
    \phi_y(y_i) | x_i \sim N(\phi_x(x_i)^T \beta, \lambda),
  \end{equation}
  where the functions $\phi_y()$ and $\phi_x()$ implementing the
  outlier projection can be defined as
  \begin{align}
    \phi_y(y_i) &= \max(-B_y, \min(B_y, y_i)) \\
    \phi_x(x_i) &= \max(-B_x, \min(B_x, x_i)).
  \end{align}
  The normalisation of data can also be included as a transformation.
  This interpretation makes explicit the flexibility in designing the
  transformations: the differential privacy guarantees will remain
  valid as long as the transformations obey the bounds
  \begin{equation}
    \label{eq:transformbounds}
    \phi_y(y_i) \in [-B_y, B_y], \quad \phi_x(x_i) \in [-B_x, B_x].
  \end{equation}

\subsubsection*{Acknowledgements}
We would like to thank Muhammad Ammad-ud-din for assistance in
data processing and Otte Heinävaara for assistance in the
theoretical analysis. 
We acknowledge the computational resources provided 
by the Aalto Science-IT project.
This work was funded by the Academy of Finland (Centre of
Excellence COIN; and grants 283193 (S.K. and M.D), 294238 and
292334 (S.K.), 278300 (A.H. and O.D.), 259440 and 283107 (A.H.)).

\bibliographystyle{abbrv}
\bibliography{dpdrugsens_refs}

\begin{thebibliography}{10}

\bibitem{Bayardo2005}
R.~Bayardo and R.~Agrawal.
\newblock Data privacy through optimal k-anonymization.
\newblock In {\em Proc. 21st Int. Conf. Data Eng. (ICDE 2005)}, 2005.

\bibitem{Blum2005}
A.~Blum, C.~Dwork, F.~McSherry, and K.~Nissim.
\newblock Practical privacy: the {SuLQ} framework.
\newblock In {\em Proc. PODS 2005}, 2005.

\bibitem{Chaudhuri2008}
K.~Chaudhuri and C.~Monteleoni.
\newblock Privacy-preserving logistic regression.
\newblock In {\em Adv. Neural Inf. Process. Syst. 21}, 2008.

\bibitem{Costello2014}
J.~C. Costello, L.~M. Heiser, E.~Georgii, M.~G{\"{o}}nen, M.~P. Menden, N.~J.
  Wang, M.~Bansal, M.~Ammad-ud din, P.~Hintsanen, S.~A. Khan, J.-P. Mpindi,
  O.~Kallioniemi, A.~Honkela, T.~Aittokallio, K.~Wennerberg, {NCI DREAM
  Community}, J.~J. Collins, D.~Gallahan, D.~Singer, J.~Saez-Rodriguez,
  S.~Kaski, J.~W. Gray, and G.~Stolovitzky.
\newblock A community effort to assess and improve drug sensitivity prediction
  algorithms.
\newblock {\em Nat. Biotechnol.}, 32(12):1202--1212, Dec 2014.

\bibitem{Diaconis1979}
P.~Diaconis and D.~Ylvisaker.
\newblock Conjugate priors for exponential families.
\newblock {\em Ann. Stat.}, 7(2):269–281, Mar 1979.

\bibitem{Duchi2014}
J.~C. Duchi, M.~I. Jordan, and M.~J. Wainwright.
\newblock Privacy aware learning.
\newblock {\em J. ACM}, 61(6):1--57, Dec 2014.

\bibitem{Dwork2006}
C.~Dwork, F.~McSherry, K.~Nissim, and A.~Smith.
\newblock Calibrating noise to sensitivity in private data analysis.
\newblock In {\em Proc. TCC 2006}. 2006.

\bibitem{Dwork2014}
C.~Dwork and A.~Roth.
\newblock The algorithmic foundations of differential privacy.
\newblock {\em Found. Trends Theor. Comput. Sci.}, 9(3-4):211--407, Aug. 2014.

\bibitem{Foulds2016}
J.~Foulds, J.~Geumlek, M.~Welling, and K.~Chaudhuri.
\newblock On the theory and practice of privacy-preserving {B}ayesian data
  analysis.
\newblock In {\em Proc.\ UAI 2016}, Mar. 2016.
\newblock arXiv:1603.07294.

\bibitem{Fredrikson2014}
M.~Fredrikson, E.~Lantz, S.~Jha, S.~Lin, D.~Page, and T.~Ristenpart.
\newblock Privacy in pharmacogenetics: An end-to-end case study of personalized
  warfarin dosing.
\newblock In {\em Proc. 23rd USENIX Security Symp. (USENIX Security 2014)},
  pages 17--32, 2014.

\bibitem{Garnett2012}
M.~J. Garnett et~al.
\newblock Systematic identification of genomic markers of drug sensitivity in
  cancer cells.
\newblock {\em Nature}, 483(7391):570--575, Mar 2012.

\bibitem{Gentry2009}
C.~Gentry.
\newblock {\em A fully homomorphic encryption scheme}.
\newblock PhD thesis, Stanford University, 2009.

\bibitem{Gymrek2013}
M.~Gymrek, A.~L. McGuire, D.~Golan, E.~Halperin, and Y.~Erlich.
\newblock Identifying personal genomes by surname inference.
\newblock {\em Science}, 339(6117):321--324, Jan 2013.

\bibitem{Harmanci2016}
A.~Harmanci and M.~Gerstein.
\newblock Quantification of private information leakage from phenotype-genotype
  data: linking attacks.
\newblock {\em Nat. Methods}, 13(3):251--256, Mar 2016.

\bibitem{Homer2008}
N.~Homer et~al.
\newblock Resolving individuals contributing trace amounts of {DNA} to highly
  complex mixtures using high-density {SNP} genotyping microarrays.
\newblock {\em PLoS Genet.}, 4(8):e1000167, Aug 2008.

\bibitem{Kucukelbir2017}
A.~Kucukelbir, D.~Tran, R.~Ranganath, A.~Gelman, and D.~M. Blei.
\newblock Automatic differentiation variational inference.
\newblock {\em J Mach Learn Res}, 18(14):1--45, 2017.

\bibitem{Li2007}
N.~Li, T.~Li, and S.~Venkatasubramanian.
\newblock t-closeness: Privacy beyond k-anonymity and l-diversity.
\newblock In {\em Proc. ICDE 2007}, 2007.

\bibitem{Machanavajjhala2007}
A.~Machanavajjhala, D.~Kifer, J.~Gehrke, and M.~Venkitasubramaniam.
\newblock L-diversity: Privacy beyond k-anonymity.
\newblock {\em TKDD}, 1(1):3, Mar 2007.

\bibitem{Naveed2015}
M.~Naveed et~al.
\newblock Privacy in the genomic era.
\newblock {\em ACM Comput. Surv.}, 48(1):1--44, Aug 2015.

\bibitem{Salvatier2016}
J.~Salvatier, T.~V. Wiecki, and C.~Fonnesbeck.
\newblock Probabilistic programming in {P}ython using {PyMC}3.
\newblock {\em {PeerJ} Computer Science}, 2:e55, apr 2016.

\bibitem{Schmidt2009}
M.~Schmidt, E.~van~den Berg, M.~Friedlander, and K.~Murphy.
\newblock Optimizing costly functions with simple constraints: A limited-memory
  projected quasi-newton algorithm.
\newblock In {\em Proc. AISTATS 2009}, 2009.

\bibitem{Smith2008}
A.~Smith.
\newblock Efficient, differentially private point estimators.
\newblock Sept. 2008.
\newblock arXiv:0809.4794 [cs.CR].

\bibitem{Wang2015a}
Y.-X. Wang, J.~Lei, and S.~E. Fienberg.
\newblock Learning with differential privacy: Stability, learnability and the
  sufficiency and necessity of {ERM} principle.
\newblock Feb. 2015.
\newblock arXiv: 1502.06309 [stat.ML].

\bibitem{Wasserman2010}
L.~Wasserman and S.~Zhou.
\newblock A statistical framework for differential privacy.
\newblock {\em J. Am. Stat. Assoc.}, 105(489):375--389, Mar 2010.

\bibitem{Williams2010}
O.~Williams and F.~McSherry.
\newblock Probabilistic inference and differential privacy.
\newblock In {\em Adv. Neural Inf. Process. Syst. 23}, 2010.

\bibitem{Wu2015}
X.~Wu, M.~Fredrikson, W.~Wu, S.~Jha, and J.~F. Naughton.
\newblock Revisiting differentially private regression: Lessons from learning
  theory and their consequences.
\newblock Dec. 2015.
\newblock arXiv:1512.06388 [cs.CR].

\bibitem{Yang2013}
W.~Yang et~al.
\newblock Genomics of drug sensitivity in cancer ({GDSC}): a resource for
  therapeutic biomarker discovery in cancer cells.
\newblock {\em Nucleic Acids Res.}, 41(Database issue):D955--D961, Jan 2013.

\bibitem{Zhang2012}
J.~Zhang, Z.~Zhang, X.~Xiao, Y.~Yang, and M.~Winslett.
\newblock Functional mechanism: Regression analysis under differential privacy.
\newblock {\em {PVLDB}}, 5(11):1364--1375, 2012.

\bibitem{Zhang2016}
Z.~Zhang, B.~Rubinstein, and C.~Dimitrakakis.
\newblock On the differential privacy of {B}ayesian inference.
\newblock In {\em Proc. AAAI 2016}, 2016.

\end{thebibliography}

\clearpage

\appendix

\section*{Supplementary Information}

\section{Theoretical background}
\label{sec:theor-backgr}

We argue that 
effective differentially private predictive modelling methods can be
developed by a combination of:
\begin{enumerate}[i.]
\item An asymptotically efficiently private mechanism for which the
  effect of the noise added to guarantee privacy vanishes as the
  number of samples increases; and \label{hypot1}
\item A way to limit the amount of private information to be shared.
  This yields better performance on finite data as less noise needs to
  be added for equivalent privacy. This can be achieved through
  a combination of two things: \label{hypot2}
  \begin{enumerate}[a.]
  \item An approach to decrease the dimensionality of the data prior to
    the application of the private algorithm; and \label{hypot2a}
  \item A method to focus the privacy guarantees to relevant variation
    in data. \label{hypot2b}
  \end{enumerate}
\end{enumerate}

Criterion \ref{hypot1} can be formally stated through additional loss
in accuracy or utility of the estimates because of privacy.  Our main
asymptotic result is that the optimal convergence rate of a
differentially private mechanism to a Bayesian estimate is
$\mathcal{O}(1/n)$, which can be reached by our proposed mechanism.

Criterion \ref{hypot2} is non-asymptotic and thus more difficult to
address theoretically.  It manifests itself in the constants in the
convergence rates as well as empirical findings on the effect of
dimensionality reduction and projecting outliers to tighter bounds as
discussed in the main text and in Fig.~\ref{fig:clipping_effect}.

\subsection{Definition of asymptotic efficiency}

We begin by formalisation of the theory behind Criterion \ref{hypot1}.

\begin{definition}
  A differentially private mechanism $\mechanism$ is
  \emph{asymptotically consistent with respect to an estimated
  parameter} $\theta$ if the private estimates $\hat{\theta}_{\mechanism}$
  given a data set $\dataset$ converge in probability to the
  corresponding non-private estimates $\hat{\theta}_{NP}$ as the
  number of samples, $n = |\dataset|$, grows without bound, i.e., if for
  any\footnote{We use $\alpha$ in limit expressions instead of usual $\epsilon$ to avoid confusion with $\epsilon$-differential privacy.} $\alpha > 0$,
  $$ \lim\limits_{n \rightarrow \infty}
  \mathrm{Pr}\{\|\hat{\theta}_{\mechanism} - \hat{\theta}_{NP}\| > \alpha\} = 0. $$
\end{definition}

\begin{definition}
  A differentially private mechanism $\mechanism$ is
  \emph{asymptotically efficiently private with respect to an estimated
    parameter} $\theta$, if the mechanism is asymptotically consistent and the
  private estimates $\hat{\theta}_{\mechanism}$ converge to
  the corresponding non-private estimates $\hat{\theta}_{NP}$
  at the rate $\mathcal{O}(1/n)$, i.e., if for any $\alpha > 0$ there exist
  constants $C, N$ such that
  $$ 
  \mathrm{Pr}\{\|\hat{\theta}_{\mechanism} - \hat{\theta}_{NP}\| > C / n \} < \alpha $$
  for all $n \ge N$.
\end{definition}

The term asymptotically efficiently private in the above definition is
justified by the following theorem, which shows that the rate
$\mathcal{O}(1/n)$ is
optimal for estimating expectation parameters of exponential family
distributions.  As it seems unlikely that better
rates could be obtained for more difficult problems, we conjecture
that this rate cannot be beaten for Bayesian estimates in general.

\begin{theorem}
  The private estimates $\hat{\theta}_{\mechanism}$ of an exponential
  family posterior expectation parameter $\theta$, generated by a
  differentially private mechanism $\mechanism$ that achieves
  $\epsilon$-differential privacy for any $\epsilon > 0$, cannot
  converge to the corresponding non-private estimates
  $\hat{\theta}_{NP}$ at a rate faster than $1/n$.  This is,
  assuming $\mechanism$ is $\epsilon$-differentially private,
  there exists no function $f(n)$ such that
  $\lim\sup n f(n) = 0$ and for all $\alpha > 0$, there exists
  a constant $N$ such that
  $$ \mathrm{Pr}\{\|\hat{\theta}_{\mechanism} - \hat{\theta}_{NP}\| > f(n)
  \} < \alpha $$ for all $n \ge N$.
\end{theorem}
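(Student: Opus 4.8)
## Proof Proposal

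The plan is to establish the lower bound by a direct contradiction argument using the defining inequality of $\epsilon$-differential privacy applied to a carefully chosen pair of neighbouring data sets. The key observation is that the expectation parameter of an exponential family posterior is a smooth, non-constant function of the sufficient statistics, so changing a single data point by a fixed amount shifts the non-private estimate $\hat{\theta}_{NP}$ by an amount that is $\Theta(1/n)$ — no faster, but crucially no slower for a suitable choice of perturbation. Concretely, I would fix a base data set $\dataset$ of size $n$ and construct a neighbour $\dataset'$ (replacing one sample) such that $\|\hat{\theta}_{NP}(\dataset) - \hat{\theta}_{NP}(\dataset')\| \ge c/n$ for some constant $c > 0$ independent of $n$, exploiting that the map from sufficient statistics to posterior mean has a non-vanishing derivative at a generic point. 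This uses only that $\theta$ is a genuine expectation parameter, so the Jacobian (the posterior covariance of the sufficient statistic, up to the prior contribution) is nondegenerate.

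Next I would suppose for contradiction that such an $f(n)$ exists with $\limsup_n n f(n) = 0$. For $n$ large enough, $f(n) < c/(3n)$, say. Then the event $A_n = \{\|\hat\theta_{\mechanism} - \hat\theta_{NP}(\dataset)\| \le f(n)\}$ has probability $> 1 - \alpha$ when the mechanism is run on $\dataset$, and the analogous event $A_n'$ (centred at $\hat\theta_{NP}(\dataset')$) has probability $> 1-\alpha$ when run on $\dataset'$. But by the triangle inequality $A_n$ and $A_n'$ are disjoint subsets of the output space once $2 f(n) < \|\hat\theta_{NP}(\dataset) - \hat\theta_{NP}(\dataset')\|$, which holds for large $n$ by the construction above. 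Differential privacy gives $\mathrm{Pr}[\mechanism(\dataset') \in A_n] \ge e^{-\epsilon}\, \mathrm{Pr}[\mechanism(\dataset) \in A_n] > e^{-\epsilon}(1-\alpha)$. Since $A_n$ and $A_n'$ are disjoint, $\mathrm{Pr}[\mechanism(\dataset') \in A_n'] \le 1 - \mathrm{Pr}[\mechanism(\dataset') \in A_n] < 1 - e^{-\epsilon}(1-\alpha)$. Choosing $\alpha$ small enough that $1 - e^{-\epsilon}(1-\alpha) < 1 - \alpha$ — possible for any fixed $\epsilon$ — contradicts the requirement that $\mathrm{Pr}[A_n'] > 1 - \alpha$. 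Hence no such $f$ exists.

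I expect the main obstacle to be the uniform lower bound on the separation of the non-private estimates: I need $\|\hat\theta_{NP}(\dataset) - \hat\theta_{NP}(\dataset')\| \ge c/n$ to hold along a sequence of data sets of growing size, not just for a single $n$. The clean way to handle this is to take $\dataset$ to consist of $n$ copies (or near-copies) of a fixed point $z_0$ and let $\dataset'$ replace one copy by a fixed point $z_1$ with $z_1 \neq z_0$; then the sufficient statistic changes by $(s(z_1) - s(z_0))/n$ plus lower-order terms, and passing this through the posterior-mean map — whose derivative at the limiting sufficient statistic is a fixed nonsingular matrix plus the vanishing prior term — yields a change of exactly order $1/n$ with an explicit constant. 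One should also be slightly careful that $\hat\theta_{NP}$ is deterministic given the data (it is, being the posterior expectation), so the only randomness in the events $A_n, A_n'$ comes from $\mechanism$, and that the argument only uses $\epsilon$-DP for the single fixed $\epsilon$ in the hypothesis — no composition or group privacy is needed since $\dataset$ and $\dataset'$ are neighbours by construction. A minor point worth noting in the write-up: the statement quantifies "for any $\alpha > 0$", so it suffices to exhibit one small $\alpha$ for which the conclusion fails, which is exactly what the contradiction produces.
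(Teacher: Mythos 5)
Your proposal is correct and follows essentially the same route as the paper's proof: exhibit neighbouring data sets whose non-private posterior-expectation estimates are separated by $\Theta(1/n)$, then show that a mechanism concentrating within $f(n) = o(1/n)$ of each would have to put mass $>1-\alpha$ and $<\alpha$ on the same region for neighbouring inputs, violating $\epsilon$-differential privacy. Your construction of the separated pair (replacing one copy of $z_0$ by $z_1$ in a near-constant data set) is in fact slightly more careful than the paper's appeal to "maximally different elements," and your final contradiction via $e^{-\epsilon}(1-\alpha)>\alpha$ is just a rearrangement of the paper's likelihood-ratio bound $(1-\alpha)/\alpha \le e^{\epsilon}$.
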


\begin{proof}
  The non-private estimate of an expectation parameter of an
  exponential family is~\cite{Diaconis1979}
  \begin{equation}
    \label{eq:exp_parameter}
    \hat{\theta}_{NP} | x_1, \dots, x_n = \frac{n_0 x_0 + \sum_{i=1}^n x_i}{n_0 + n}.
  \end{equation}
  The difference of the estimates from two neighbouring data sets
  differing by one element is
  \begin{equation}
    \label{eq:est_diff}
    (\hat{\theta}_{NP} | \dataset) - (\hat{\theta}_{NP} | \dataset')
    = \frac{x - y}{n_0 + n},
  \end{equation}
  where $x$ and $y$ are the corresponding mismatched elements.
  Let $\Delta = \max(\| x - y \|)$, and let $\dataset$ and $\dataset'$
  be neighbouring data sets including these maximally different
  elements.

  Let us assume that there exists a function $f(n)$ such that
  $\lim\sup n f(n) = 0$ and for all $\alpha > 0$ there exists
  a constant $N$ such that
  $$ \mathrm{Pr}\{\|\hat{\theta}_{\mechanism} - \hat{\theta}_{NP}\| > f(n)
  \} < \alpha $$ for all $n \ge N$.

  Fix $\alpha > 0$ and choose $M \ge \max(N, n_0)$ such that $f(n)
  \le \Delta / 4n$ for all $n \ge M$. This implies that
  \begin{equation}
    \label{eq:estimatediff}
    \| (\hat{\theta}_{NP} | \dataset) - (\hat{\theta}_{NP} | \dataset') \|
    = \frac{\Delta}{n_0 + n} \ge \frac{\Delta}{2n} \ge 2 f(n).
  \end{equation}

  Let us define the region $C_\dataset = \{ t \;|\; \| (\hat{\theta}_{NP} | \dataset) - t \| < f(n) \}. $
  Based on our assumptions we have
  \begin{align}
    \mathrm{Pr}(\hat{\theta}_{\mechanism} | \dataset \in C_\dataset) &> 1 - \alpha \\
    \mathrm{Pr}(\hat{\theta}_{\mechanism} | \dataset' \in C_\dataset) &< \alpha
  \end{align}
  which implies that
  \begin{equation}
    \label{eq:dp_test}
    \frac{\mathrm{Pr}(\hat{\theta}_{\mechanism} | \dataset \in C_\dataset)}
    {\mathrm{Pr}(\hat{\theta}_{\mechanism} | \dataset' \in C_\dataset)}
    > \frac{1 - \alpha}{\alpha}
  \end{equation}
  which means that $\mechanism$ cannot be differentially private with
  $\epsilon < \log \left((1 - \alpha)/\alpha\right) \rightarrow \infty$ as
  $\alpha \rightarrow 0$.
\end{proof}

\subsection{Different utility functions}

\begin{definition}
Let $\mathcal{U}(\hat{\theta}_{NP}(\dataset))$ measure
the utility of the non-private model $\hat{\theta}_{NP}$ estimated from
data set $\dataset$ and let
$\mathcal{U}(\hat{\theta}_{\mechanism}(\dataset))$ measure the
corresponding utility of the private model $\hat{\theta}_{\mechanism}$
obtained using differentially private mechanism $\mechanism$.  The
mechanism $\mechanism$ is \emph{asymptotically consistent with respect
to a bounded utility} $\mathcal{U}$ if the random variables
$\mathcal{U}(\hat{\theta}_{\mechanism}(\dataset))$ converge in
probability to $\mathcal{U}(\hat{\theta}_{NP}(\dataset))$ as the number of
samples, $n = |\dataset|$, grows without bound, i.e., if for any
$\alpha > 0$,
$$ \lim\limits_{n \rightarrow \infty} \mathrm{Pr}\{|\mathcal{U}(\hat{\theta}_{\mechanism}(\dataset)) - \mathcal{U}(\hat{\theta}_{NP}(\dataset))| > \alpha\} = 0. $$
\end{definition}

\begin{theorem}
  A differentially private mechanism $\mechanism$ that is
  asymptotically consistent with respect to a set of parameters is
  asymptotically consistent with respect to any continuous utility
  that only depends on those parameters.
\end{theorem}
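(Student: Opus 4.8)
The plan is to recognise the statement as a triangular-array version of the continuous mapping theorem. Write $\hat\theta_{\mechanism}=\hat\theta_{\mechanism}(\dataset)$ and $\hat\theta_{NP}=\hat\theta_{NP}(\dataset)$ for the private and non-private estimates of the parameter block $\theta$ on which the utility depends, and let $\mathcal{U}$ be continuous in $\theta$ alone. By hypothesis the mechanism is asymptotically consistent with respect to $\theta$, so for every $\eta>0$ we have $\mathrm{Pr}\{\|\hat\theta_{\mechanism}-\hat\theta_{NP}\|>\eta\}\to 0$ as $n\to\infty$. Fix $\alpha>0$; the goal is to show $\mathrm{Pr}\{|\mathcal{U}(\hat\theta_{\mechanism})-\mathcal{U}(\hat\theta_{NP})|>\alpha\}\to 0$.

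First I would treat the case in which the parameters stay in a fixed bounded set, which is exactly the situation for robust private linear regression: projecting the data to $[-B_x,B_x]$ and $[-B_y,B_y]$ keeps the sufficient statistics, hence the posterior expectation parameters, inside a bounded region independent of $n$. On (the closure of) such a set a continuous $\mathcal{U}$ is uniformly continuous, so there is $\eta=\eta(\alpha)>0$ with $\|\theta-\theta'\|\le\eta\Rightarrow|\mathcal{U}(\theta)-\mathcal{U}(\theta')|\le\alpha$ for all admissible $\theta,\theta'$. Then $\{|\mathcal{U}(\hat\theta_{\mechanism})-\mathcal{U}(\hat\theta_{NP})|>\alpha\}\subseteq\{\|\hat\theta_{\mechanism}-\hat\theta_{NP}\|>\eta\}$, and the probability of the right-hand event tends to $0$, which already settles the mechanism studied in this paper.

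For the general (unbounded) case the main obstacle — and the reason one cannot just quote the ordinary continuous mapping theorem — is that the "limit" $\hat\theta_{NP}$ is itself $n$-dependent, and a merely continuous $\mathcal{U}$ need not be uniformly continuous on all of $\mathbb{R}^d$. I would handle this by a truncation argument using tightness of the non-private estimates: in the Bayesian setting here the posterior expectation parameter $\hat\theta_{NP}$ is consistent, so the family $\{\hat\theta_{NP}:n\ge 1\}$ is tight and there is $R=R(\alpha)$ with $\mathrm{Pr}\{\|\hat\theta_{NP}\|>R\}<\alpha/2$ for all $n$. Applying uniform continuity of $\mathcal{U}$ on the compact ball $\{\|\theta\|\le R+1\}$ yields $\eta\in(0,1]$ as before; on the event $\{\|\hat\theta_{NP}\|\le R\}\cap\{\|\hat\theta_{\mechanism}-\hat\theta_{NP}\|\le\eta\}$ both estimates lie in that ball, so $|\mathcal{U}(\hat\theta_{\mechanism})-\mathcal{U}(\hat\theta_{NP})|\le\alpha$. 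Hence
\[
  \mathrm{Pr}\{|\mathcal{U}(\hat\theta_{\mechanism})-\mathcal{U}(\hat\theta_{NP})|>\alpha\}
  \le \mathrm{Pr}\{\|\hat\theta_{NP}\|>R\}+\mathrm{Pr}\{\|\hat\theta_{\mechanism}-\hat\theta_{NP}\|>\eta\}
  < \tfrac{\alpha}{2}+o(1),
\]
which is $<\alpha$ for $n$ large; since $\alpha$ was arbitrary the claim follows. If one prefers the statement without any tightness hypothesis, the identical two-line inclusion argument of the bounded case applies as soon as $\mathcal{U}$ is assumed uniformly continuous; I would therefore present the tightness of $\hat\theta_{NP}$ (automatic under posterior consistency, and under the data projection of our method) as the mild condition that makes the theorem clean.
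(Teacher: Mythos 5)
Your proof is correct, and it is in fact more careful than the one in the paper. The paper disposes of this theorem in one line by citing the continuous mapping theorem: if $\hat{\theta}_{\mechanism}$ converges in probability to $\hat{\theta}_{NP}$, then $\mathcal{U}(\hat{\theta}_{\mechanism})$ converges in probability to $\mathcal{U}(\hat{\theta}_{NP})$. You correctly identify why this citation is not quite a complete argument: the ``limit'' $\hat{\theta}_{NP}$ is itself an $n$-dependent random variable, and for a merely continuous (even bounded continuous) $\mathcal{U}$ the implication $\|\hat{\theta}_{\mechanism}-\hat{\theta}_{NP}\|\to_p 0 \Rightarrow |\mathcal{U}(\hat{\theta}_{\mechanism})-\mathcal{U}(\hat{\theta}_{NP})|\to_p 0$ can fail if the sequence $\hat{\theta}_{NP}$ escapes to infinity (take $\mathcal{U}(\theta)=\sin(\theta^2)$ with $\hat{\theta}_{NP}$ drifting at rate $\sqrt{n}$). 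Your two-step repair --- uniform continuity on a compact set in the bounded case relevant to the paper's projected statistics, and a tightness-plus-truncation argument in general --- is exactly the standard way to make the ``$n$-dependent limit'' version of the continuous mapping theorem rigorous, and the tightness of the non-private Bayesian estimates is indeed a mild and natural hypothesis here. The paper's version buys brevity at the cost of implicitly assuming this regularity; yours makes the needed condition explicit. Both arrive at the same conclusion by essentially the same mechanism.
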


\begin{proof}
  If $\hat{\theta}_{\mechanism}$ converges in probability to
  $\hat{\theta}_{NP}$ then by the continuous mapping theorem the
  value of $\mathcal{U}(\hat{\theta}_{\mechanism})$ will converge in
  probability to $\mathcal{U}(\hat{\theta}_{NP})$.
\end{proof}

\subsection{Example: Gaussian mean}

\begin{theorem}\label{thm:efficiency_of_gaussian_mean}
  Differentially private inference of the mean of a Gaussian variable,
  with Laplace mechanism to perturb the sufficient statistics, is
  asymptotically consistent with respect to the posterior mean.
\end{theorem}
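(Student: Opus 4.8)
The plan is to exploit the fact that, for a Gaussian mean, the posterior mean is an affine function of a single sufficient statistic with a coefficient of order $1/n$, so that the one fixed-scale Laplace perturbation needed for privacy is suppressed as $n$ grows. Concretely, for scalar observations $y_i \mid \mu \sim N(\mu, \lambda)$ with conjugate prior $\mu \sim N(0, \lambda_0)$ (the argument is unchanged if $\lambda$ carries a Gamma prior, since the posterior mean of $\mu$ depends on the data only through $S = \sum_{i=1}^n y_i$), the non-private posterior mean computed on the data after the bounding transformation of Eq.~\eqref{eq:clip} is $\hat{\mu}_{NP} = \lambda S / (\lambda_0 + n\lambda)$, with non-random posterior precision $\lambda_0 + n\lambda$. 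As in robust private linear regression, the $\ell_1$-sensitivity of $S$ under bounded differential privacy is $2B_y$, so the private mechanism releases $\tilde{S} = S + W$ with $W \sim \mathrm{Laplace}(0, 2B_y/\epsilon)$ and forms $\hat{\mu}_{\mechanism} = \lambda \tilde{S} / (\lambda_0 + n\lambda)$.

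The first step is then to write $\hat{\mu}_{\mechanism} - \hat{\mu}_{NP} = \lambda W / (\lambda_0 + n\lambda)$ and to note that $W$ is a single Laplace variable whose scale $2B_y/\epsilon$ does not depend on $n$. For any $\alpha > 0$ this yields
\[
  \mathrm{Pr}\{|\hat{\mu}_{\mechanism} - \hat{\mu}_{NP}| > \alpha\}
  = \mathrm{Pr}\!\left\{ |W| > \frac{(\lambda_0 + n\lambda)\alpha}{\lambda} \right\}
  \longrightarrow 0 \quad (n \to \infty),
\]
since the threshold diverges while the law of $W$ is fixed; this is precisely asymptotic consistency with respect to the posterior mean. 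The same estimate immediately upgrades to efficiency: choosing $C$ with $\mathrm{Pr}\{|W| > C\} < \alpha$ and using $\lambda/(\lambda_0 + n\lambda) \le 1/n$ gives $\mathrm{Pr}\{|\hat{\mu}_{\mechanism} - \hat{\mu}_{NP}| > C/n\} < \alpha$ for all $n$, matching the $\mathcal{O}(1/n)$ rate shown to be optimal above.

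I expect the only real care to lie in the bookkeeping rather than in the analysis. One must state explicitly that $\hat{\theta}_{NP}$ is the non-private estimate formed from the \emph{same} bounded data, so that the comparison does not absorb a clipping bias, and that $B_y$ is held fixed (or at worst allowed to grow as $o(n)$) so that the Laplace scale stays negligible against the $1/n$ shrinkage. This is also exactly where the contrast with naive input perturbation enters: perturbing each $y_i$ independently injects noise into every one of the $n$ summands of $S$, so the noise accumulated in the sufficient statistic is of order $\sqrt{n}$ rather than $O(1)$, and re-running the computation above with that noise shows such a mechanism cannot reach the $\mathcal{O}(1/n)$ rate — and, for constructions where the per-sample scale must itself grow to keep the released data set private, can fail consistency outright — which is the negative comparison promised in the Methods.
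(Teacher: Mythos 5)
Your argument is essentially the paper's own proof: both write the private-minus-non-private posterior mean as a fixed-law Laplace perturbation scaled by the $\mathcal{O}(1/n)$ posterior-precision factor, so the tail probability vanishes as $n\to\infty$. The only difference is that you work in the scalar case while the paper treats a $d$-dimensional Gaussian with precision matrices (needing the extra bound $\|(\Lambda^{-1}\Lambda_0 + nI)^{-1}\delta\|_1 \le \tfrac{c}{n}\|\delta\|_1$), a routine extension; your closing remarks on efficiency and on naive input perturbation correctly anticipate the paper's subsequent theorems.
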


\begin{proof}
  Let us consider the model
  \begin{align*}
    x_i &\sim N(\mu, \Lambda)\\
    \mu &\sim N(\mu_0, \Lambda_0)
  \end{align*}
  with $\mu$ as the unknown parameter and $\Lambda$ and $\Lambda_0$
  denoting the fixed prior precision matrices of the noise and the mean,
  respectively.
  We assume $||x_i||_1 \le B$ and enforce this by projecting
  the larger elements to satisfy this bound.

  Let the observed data set be $\dataset = \{x_i\}_{i=1}^n$ with
  sufficient statistic $n \bar{x} = \sum_{i=1}^n x_i$.

  The non-private posterior mean is
  $$ \mu_{NP} = (\Lambda_0 + n \Lambda)^{-1} (\Lambda n \bar{x} + \Lambda_0 \mu_0). $$
  The corresponding private posterior mean is obtained by replacing
  $n\bar{x}$ with the perturbed version
  $n\bar{x}' = n\bar{x} + \delta$, where $\delta=(\delta_1,\dots,\delta_d)^T \in \R^d$ with 
  $\delta_j \sim \mathrm{Laplace}(0, \frac{2Bd}{\epsilon})$ and $d =
\dim(x_i)$, yielding
  $$ \mu_{DP} = (\Lambda_0 + n \Lambda)^{-1} (\Lambda (n \bar{x} + \delta) + \Lambda_0 \mu_0). $$

  The difference of the private and non-private means is
  \begin{align*}
    \|\mu_{DP} - \mu_{NP}\|_1 &= \| (\Lambda_0 + n \Lambda)^{-1} (\Lambda \delta) \|_1 \\
    &= \| (\Lambda^{-1} \Lambda_0 + n \cdot I)^{-1} \delta \|_1
      \le \frac{c}{n} \| \delta \|_1,
  \end{align*}
  which is valid for all $c > 1$ for large enough $n$.
  This implies that
  $$ \mathrm{Pr}\{\|\mu_{DP} - \mu_{NP}\|_1 \ge \alpha\} \le
  \mathrm{Pr}\left\{\frac{c}{n} \|\delta\|_1 \ge \alpha\right\} \rightarrow 0 $$ as
  $n \rightarrow \infty$ for all $\alpha > 0$.
\end{proof}

\begin{theorem}
  Differentially private inference of the mean of a Gaussian variable
  with Laplace mechanism to perturb the input data set (naive input
  perturbation) is \emph{not} asymptotically consistent with respect
  to the posterior mean.
\end{theorem}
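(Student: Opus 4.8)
The plan is to show that the private posterior mean converges to a point that is strictly offset from the limit of the non-private posterior mean, so the two cannot agree in probability. First I would pin down the naive mechanism: project each $x_i$ to the bound (say $\|x_i\|_1 \le B$, enforced exactly as in Theorem~\ref{thm:efficiency_of_gaussian_mean}), add i.i.d.\ Laplace noise to every coordinate of every point with scale set by the $\ell_1$-sensitivity of releasing the whole data set — a scale that is $O(1)$ in $n$ — and then re-project the noisy points so that the released object is again a valid bounded data set. Writing $z_i = \Pi_B(\Pi_B(x_i) + \eta_i)$ with $\Pi_B$ the projection and $\eta_i$ the Laplace noise, the private posterior mean is $\mu_{DP} = (\Lambda_0 + n\Lambda)^{-1}(\Lambda \sum_{i=1}^n z_i + \Lambda_0 \mu_0)$, while the corresponding non-private estimate is the posterior mean from the projected-but-unperturbed data, $\mu_{NP} = (\Lambda_0 + n\Lambda)^{-1}(\Lambda \sum_{i=1}^n \Pi_B(x_i) + \Lambda_0 \mu_0)$, matching the convention used in Theorem~\ref{thm:efficiency_of_gaussian_mean}.

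Next I would pass to the limit. The pairs $(x_i,\eta_i)$ are i.i.d.\ and the summands $z_i$ and $\Pi_B(x_i)$ are bounded, so by the strong law of large numbers $\frac1n\sum_i z_i \to m_2 := \mathbb{E}[\Pi_B(\Pi_B(X)+\eta)]$ and $\frac1n\sum_i \Pi_B(x_i) \to m_1 := \mathbb{E}[\Pi_B(X)]$ almost surely, where $X \sim N(\mu,\Lambda)$. Because $(\Lambda_0 + n\Lambda)^{-1}(\Lambda n)\to I$ and $(\Lambda_0 + n\Lambda)^{-1}\Lambda_0\mu_0 \to 0$, this gives $\mu_{DP} \to m_2$ and $\mu_{NP} \to m_1$ almost surely, hence $\mu_{DP}-\mu_{NP}\to m_2 - m_1$; so the mechanism is asymptotically consistent with respect to the posterior mean only if $m_2 = m_1$, and it suffices to exhibit one data-generating distribution where this fails.

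The hard part is establishing $m_2 \neq m_1$. Using $\mathbb{E}[\eta]=0$, I would write $m_2 - m_1 = \mathbb{E}[\Pi_B(\Pi_B(X)+\eta) - (\Pi_B(X)+\eta)]$, which decomposes coordinatewise into a nonnegative contribution from the event $\{\Pi_B(X)+\eta < -B\}$ and a nonpositive contribution from $\{\Pi_B(X)+\eta > B\}$. For $\mu=0$ with a symmetric $\Lambda$ these cancel and the mechanism is in fact consistent, so the counterexample must use $\mu \neq 0$: taking, e.g., a one-dimensional model with $\mu$ close to $B$, the law of $\Pi_B(X)$ piles up near the upper bound, and since the Laplace noise has full support the upper-clipping term strictly dominates, forcing $m_2 - m_1 \neq 0$. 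The only genuine delicacy is this strict-sign comparison; the rest is a routine law-of-large-numbers argument. It is worth remarking that the failure is caused precisely by the re-projection step: without it the mean-zero Laplace perturbations would average out in $\sum_i z_i$ and the estimator would be consistent, though only at rate $\mathcal{O}(1/\sqrt{n})$ rather than the optimal $\mathcal{O}(1/n)$ — which is exactly why this naive variant is tempting yet inadequate.
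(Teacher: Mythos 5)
Your argument is internally sound, but it establishes the statement for a different mechanism than the one the paper analyses, and the difference is substantive. The paper's ``naive input perturbation'' releases $n\bar{x}' = n\bar{x} + \sum_{i=1}^n \delta_i$ with i.i.d.\ Laplace noise of fixed ($n$-independent) scale added to each projected data point and \emph{no} re-projection; its proof lower-bounds $\|\mu_{DP}-\mu_{NP}\|_1$ by $\tfrac{1}{2}\bigl\|\tfrac{1}{n}\sum_{i=1}^n\delta_i\bigr\|_1$ and then appeals to the central limit theorem to claim this quantity has a non-degenerate Gaussian limit. Your closing remark is the correct observation here: $\tfrac{1}{n}\sum_i\delta_i \to 0$ by the law of large numbers (a non-vanishing Gaussian limit appears only at the $1/\sqrt{n}$ scaling), so the paper's lower bound tends to zero and does not establish inconsistency --- without re-projection the estimator is in fact consistent, merely at the inefficient rate $\mathcal{O}(1/\sqrt{n})$ rather than $\mathcal{O}(1/n)$. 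Your version inserts the re-projection $z_i=\Pi_B(\Pi_B(x_i)+\eta_i)$, and that is precisely the step that genuinely breaks consistency: the reduction via the strong law and $(\Lambda_0+n\Lambda)^{-1}\Lambda n\to I$ to the comparison of $m_2$ with $m_1$ is routine, and your strict-sign argument (take $d=1$ and $\mu$ at or above $B$, so that the upper clipping strictly dominates because the Laplace noise has full support and clipping can only pull the value down) can be made fully rigorous; exhibiting one such data-generating distribution suffices to refute consistency. So you have a correct proof of inconsistency for the re-projected variant, together with an accurate demonstration that the unprojected variant the paper actually defines is consistent. If the theorem is meant to refer to the paper's mechanism, the paper's own proof needs repair, and your bias-through-clipping construction is one way to recover a true statement of this flavour.
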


\begin{proof}
  The mechanism is almost the same as in
  Theorem~\ref{thm:efficiency_of_gaussian_mean},
  but we now have $n\bar{x}' = n\bar{x} + \sum_{i=1}^n \delta_i)$ where $\delta_i=(\delta_{i1},\dots,\delta_{id})^T \in \R^d$ with
  $\delta_{ij} \sim \mathrm{Laplace}(0, \frac{2Bd}{\epsilon})$.  Similar computation as above
  yields
  \begin{align*}
    \|\mu_{DP} - \mu_{NP}\|_1 &= \left\| (\Lambda_0 + n \Lambda)^{-1} (\Lambda \sum_{i=1}^n \delta_i) \right\|_1 \\
    &= \left\| (\frac{1}{n}\Lambda^{-1} \Lambda_0 + I)^{-1} \frac{1}{n} \sum_{i=1}^n \delta_i \right\|_1
      \ge \frac{1}{2} \left\| \frac{1}{n} \sum_{i=1}^n \delta_i \right\|_1
  \end{align*}
  for sufficiently large $n$.  By the central limit theorem the
  distribution of $\frac{1}{n} \sum_{i=1}^n \delta_i$ converges to a
  Gaussian with non-zero variance.
  Hence $\mu_{DP}$ does not converge to $\mu_{NP}$ for
  large $n$ and the method is not asymptotically consistent.
\end{proof}

\subsubsection{Asymptotic efficiency}

\begin{theorem}\label{thm:rate_gaussian_mean}
  $\epsilon$-differentially private estimate of the mean of a
  $d$-dimensional Gaussian variable $x$ bounded by $\|x_i\|_1 \le B$
  in which the Laplace mechanism is used to perturb the sufficient statistics,
  is asymptotically efficiently private.
\end{theorem}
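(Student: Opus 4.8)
The plan is to reuse the two-part structure already established in the proof of Theorem~\ref{thm:efficiency_of_gaussian_mean}: first show asymptotic consistency (which is essentially done there), then upgrade it to a rate-$\mathcal{O}(1/n)$ statement matching the definition of asymptotic efficiency. Concretely, from the computation in Theorem~\ref{thm:efficiency_of_gaussian_mean} we already have the bound
\[
  \|\mu_{DP} - \mu_{NP}\|_1 = \left\| (\Lambda^{-1}\Lambda_0 + n I)^{-1} \delta \right\|_1 \le \frac{c}{n}\,\|\delta\|_1
\]
for any $c > 1$ and all sufficiently large $n$, where $\delta$ is a fixed $d$-dimensional vector of independent $\mathrm{Laplace}(0, 2Bd/\epsilon)$ variables whose distribution does \emph{not} depend on $n$. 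So the entire $n$-dependence has already been extracted into the explicit $1/n$ factor, and what remains is purely a statement about the tail of the fixed random variable $\|\delta\|_1$.

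First I would fix $\alpha > 0$ and note that since $\|\delta\|_1$ is a finite almost-surely-finite random variable (a sum of $d$ independent Laplace magnitudes, hence with finite moments and an exponential tail), there exists a constant $K = K(\alpha, d, B, \epsilon)$ such that $\mathrm{Pr}\{\|\delta\|_1 > K\} < \alpha$; this $K$ is independent of $n$. Then I would set $C = cK$ (with $c$ any constant exceeding $1$, say $c = 2$) and choose $N$ large enough that the inequality $\|\mu_{DP}-\mu_{NP}\|_1 \le (c/n)\|\delta\|_1$ holds for all $n \ge N$. Combining the two facts,
\[
  \mathrm{Pr}\{\|\mu_{DP}-\mu_{NP}\|_1 > C/n\}
  \le \mathrm{Pr}\{(c/n)\|\delta\|_1 > C/n\}
  = \mathrm{Pr}\{\|\delta\|_1 > K\} < \alpha
\]
for all $n \ge N$, which is exactly the definition of asymptotic efficiency with respect to the posterior mean. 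The $\epsilon$-differential privacy of the mechanism itself is inherited from the Laplace mechanism applied to the sufficient statistic $n\bar x$, whose $\ell_1$-sensitivity under the projection $\|x_i\|_1 \le B$ is at most $2B$; the per-coordinate scale $2Bd/\epsilon$ used in Theorem~\ref{thm:efficiency_of_gaussian_mean} guarantees this, so privacy need not be re-proved here.

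The only genuinely non-routine point is the justification that $K$ can be chosen independently of $n$ — i.e., that the law of $\|\delta\|_1$ truly does not drift with $n$. This is immediate here because the Laplace scale $2Bd/\epsilon$ depends only on the fixed quantities $B$, $d$, $\epsilon$ and not on the sample size; were the noise scale to grow with $n$ (as in the naive input-perturbation mechanism), the argument would break, which is precisely the contrast drawn in the preceding theorem. I expect no other obstacle: the matrix-norm bound $\|(\Lambda^{-1}\Lambda_0 + nI)^{-1}\|_{1} \le c/n$ for large $n$ is elementary (the eigenvalues of $\Lambda^{-1}\Lambda_0 + nI$ all exceed $n$ once $n$ dominates the fixed spectrum of $\Lambda^{-1}\Lambda_0$, up to the usual care with the operator norm induced by $\|\cdot\|_1$), and the tail bound on $\|\delta\|_1$ follows from any crude estimate such as Markov's inequality applied to $\mathbb{E}\|\delta\|_1 = d \cdot 2Bd/\epsilon$.
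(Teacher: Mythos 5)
Your proposal is correct and follows essentially the same route as the paper: it reuses the bound $\|\mu_{DP}-\mu_{NP}\|_1 \le (c/n)\|\delta\|_1$ from Theorem~\ref{thm:efficiency_of_gaussian_mean} and then picks an $n$-independent tail quantile $C$ for $\|\delta\|_1$; the paper merely makes your constant $K$ explicit by identifying $\|\delta\|_1 \sim \mathrm{Gamma}(d, \epsilon/(2Bd))$ and taking $C > c\,F^{-1}(1-\alpha; d, \epsilon/(2Bd))$.
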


\begin{proof}
In the proof of Theorem~\ref{thm:efficiency_of_gaussian_mean} we
showed that
$$ \|\mu_{DP} - \mu_{NP}\|_1 \le \frac{c}{n} \| \delta \|_1, $$
where $\delta = (\delta_1, \dots, \delta_d)^T \in \R^D$ with $\delta_j \sim \mathrm{Laplace}\left(0,
\frac{2Bd}{\epsilon}\right)$.

Because $\delta_j$ is Laplace, $|\delta_j|$ is exponential with
$$ |\delta_j| \sim \mathrm{Exponential}\left(\frac{\epsilon}{2Bd}\right) $$
and
$$ \|\delta\|_1 = \sum_{j=1}^d |\delta_j| \sim \mathrm{Gamma}\left(d, \frac{\epsilon}{2Bd}\right). $$

Given $\alpha > 0$ we can choose $C > c F^{-1}(1-\alpha; d, \epsilon/2Bd)$,
where $F^{-1}(x; a, b)$ is the inverse cumulative distribution function of
the Gamma distribution with shape $a$ and rate $b$,
to ensure that 
\begin{equation}
  \label{eq:gaussmean_prob}
  \mathrm{Pr}\left\{\|\mu_{DP} - \mu_{NP}\|_1 > \frac{C}{n} \right\}
  \le \mathrm{Pr}\left\{ \frac{1}{n} \|\delta\|_1 > \frac{C}{n} \right\}
  = \mathrm{Pr}\{ \|\delta\|_1 > C \}
  < \alpha.
\end{equation}
\end{proof}

\subsubsection{Convergence rate}

We can further study the probability of making an error of at least
a given magnitude as
\begin{align}
  \mathrm{Pr}\{\|\mu_{DP} - \mu_{NP}\|_1 \ge \phi\}
  & \le \mathrm{Pr}\left\{\frac{c}{n} \|\delta\|_1 \ge \phi\right\} \nonumber\\
  & = \mathrm{Pr}\left\{\mathrm{Gamma}\left(d, \frac{n \epsilon}{2Bcd}\right) \ge \phi\right\} \nonumber\\
  & = 1 - F\left(\phi; d, \frac{n \epsilon}{2Bcd}\right)
    = 1 - \frac{\gamma(d, \frac{n \phi \epsilon}{2Bcd})}{\Gamma(d)},
  \label{eq:gauss_rate}
\end{align}
where $F(x; a, b)$ is the cumulative distribution function of
the Gamma distribution with shape $a$ and rate $b$.

The formula in Eq.~\eqref{eq:gauss_rate} unfortunately has no simple
closed form expression.
The result shows, however, that the $n$ required to reach a certain
level of performance is linear in $B$ and $\frac{1}{\epsilon}$.  The
dependence on $d$ is complicated, but it is in general super-linear as
suggested by the mean of the gamma distribution in
Eq.~\eqref{eq:gauss_rate}, $\frac{2Bd^2}{n \epsilon}$.

\subsection{Example: Zhang et al., AAAI 2016, (arxiv:1512.06992)}

In their paper Zhang et al.\ derive utility bounds for a number of
mechanisms.  The bounds are clearly insufficient to demonstrate
the asymptotic efficiency of the corresponding methods. For Laplace
mechanism applied to Bayesian network inference, their bound on
excess KL-divergence as a function of the data set size $n$ is
\begin{equation*}
  \mathcal{O}(m n \ln n) \left[1 - \exp\left(-\frac{n\epsilon}{2 |\mathcal{I}|}\right)  \right] + \sqrt{- \mathcal{O}(m n \ln n) \ln \delta}.
\end{equation*}

\section{Differentially private linear regression}
\label{sec:diff-priv-line}

Let us next consider the linear regression model with fixed noise
$\Lambda$,
\begin{align*}
  y_i | x_i &\sim N(x_i^T \beta, \Lambda)\\
  \beta &\sim N(\beta_0, \Lambda_0),
\end{align*}
with $\beta$ as the unknown parameter and $\Lambda$ and $\Lambda_0$
denoting the precision matrices of the corresponding distributions.

Let the observed data set be $\dataset = \{(x_i, y_i)\}_{i=1}^n$ with
sufficient statistics $n\widebar{xx} = \sum_{i=1}^n x_i x_i^T$ and
$n\widebar{xy} = \sum_{i=1}^n x_i y_i$.

The non-private posterior precision of $\beta$ is
$$ \Lambda_{NP} = \Lambda_0 + \Lambda n\widebar{xx} $$
and the corresponding posterior mean is
\begin{equation}
  \label{eq:linreg_np}
  \mu_{NP} = \Lambda_{NP}^{-1} (\Lambda n\widebar{xy} + \Lambda_0 \beta_0).
\end{equation}
The corresponding private posterior precision is obtained by replacing
$n\widebar{xx}$ with the perturbed version $n\widebar{xx}' = n\widebar{xx} + \Delta$,
where $\Delta$ follows the Laplace distribution according
to the Laplace mechanism, yielding
$$ \Lambda_{DP} = \Lambda_0 + \Lambda (n\widebar{xx} + \Delta). $$
Similarly using $n\widebar{xy}' = n\widebar{xy} + \delta$ with $\delta$
following the Laplace mechanism we obtain
\begin{equation}
  \label{eq:linreg_dp}
  \mu_{DP} = \Lambda_{DP}^{-1} (\Lambda (n\widebar{xy} + \delta) + \Lambda_0 \beta_0).
\end{equation}
As presented in Methods, a more robust alternative is to assign prior
distributions to the precision parameters and then sample the posterior.
This requires using the three sufficient statistics $n\widebar{xx}$, 
$n\widebar{xy}$, and $n\widebar{yy}$ that are perturbed with suitable noise.
The mechanism is presented in detail in Algorithm~\ref{alg:algos} 
and proven to guarantee differential privacy in Theorem~\ref{dp_proof}.
For theoretical analysis, we study the model with fixed precision parameters
and an even privacy budget split between the two needed sufficient statistics.
In Algorithm~\ref{alg:algos} and Theorem~\ref{dp_proof}, this corresponds to 
setting $p_1=p_2=0.5$ and leaving out the unnecessary term $S_{yy}$.

\subsection{The detailed mechanism}

\begin{algorithm}[t]
\caption{Differentially private statistics release}
\label{alg:algos}
 \begin{algorithmic}
\State $p_1+p_2+p_3=1$
\Function{DiffPriSS}{$X$, $Y$, $\epsilon$, $B_x$, $B_y$}
  \State $n=|Y|$, $d=\mathrm{dim}(X)$
  \State $(C,D) = \textproc{project}(X,Y,B_x,B_y)$
  \For{$i \in \{1, \dots, n\}$}
    \For{$j \in \{i, \dots, n\}$}
      \State $P_{ij} = P_{ji} \sim \mathrm{Laplace}\left(0, \frac{d(d+1)B_x^2}{p_1\epsilon}\right)$
    \EndFor
  \EndFor
  \For{$i \in \mbb{I}$} 
    \State $Q_i \sim \mathrm{Laplace}\left(0, \frac{2dB_x B_y}{p_2\epsilon}\right)$
   \EndFor
  \State $R \sim \mathrm{Laplace}\left(0,\frac{B_y^2}{p_3\epsilon}\right)$
  \State $S_{xx} = CC' + P$
  \State $S_{xy} = CD + Q$
  \State $S_{yy} = DD' + R$
\EndFunction
\Function{Project}{$X$, $Y$, $B_x$, $B_y$}
\For{$j=1$ to $n$}
  \For{$i=1$ to $d$} 
    \State $C_{ij} = \max(-B_x, \min(B_x, X_{ij}))$
  \EndFor
    \State $D_{j}= \max(-B_y, \min(B_y, Y_{j}))$
\EndFor
\EndFunction
\end{algorithmic}
\end{algorithm}

The function \textproc{project} in Algorithm~\ref{alg:algos} projects
the data points into a useful space and computes the sufficient
statistics.

\begin{theorem}\label{dp_proof}
  Algorithm \textproc{DiffPriSS} in Algorithm~\ref{alg:algos} is
  $\epsilon$-differentially private.
\end{theorem}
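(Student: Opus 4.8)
The plan is to show that \textsc{DiffPriSS} is nothing more than the independent composition of three applications of the Laplace mechanism, each calibrated to the $\ell_1$-sensitivity of one of the three released sufficient statistics, and then to invoke the standard composition and post-processing properties of differential privacy. So the whole proof reduces to three sensitivity calculations plus bookkeeping.

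First I would dispense with the \textsc{project} step: it is a deterministic, coordinate-wise clipping map applied identically regardless of the input, so it leaks nothing on its own, and if $\dataset$ and $\dataset'$ are neighbouring (same size $n$, agreeing on $n-1$ rows) then their projections $(C,D)$ and $(C',D')$ are neighbouring in the same sense. Hence it suffices to bound, on the projected data, the $\ell_1$-sensitivity of each of the maps $\dataset \mapsto CC^T$, $\dataset \mapsto CD$, $\dataset \mapsto DD^T$, where for the symmetric matrix $CC^T$ the relevant output ``vector'' is its $d(d+1)/2$ upper-triangular entries — exactly the set of independently perturbed coordinates $P_{ij}$, $i \le j$, in the algorithm.

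Next I would carry out the three computations. Changing a single projected row from $c_i$ to $c_i'$, with $\|c_i\|_\infty, \|c_i'\|_\infty \le B_x$, changes $CC^T$ by $c_i c_i^T - c_i' c_i'^T$; each entry of this difference has absolute value at most $2B_x^2$, so summing over the $d(d+1)/2$ upper-triangular coordinates gives sensitivity at most $d(d+1)B_x^2$. Similarly the change in $CD$ is $c_i d_i - c_i' d_i'$, each of whose $d$ coordinates is bounded by $2 B_x B_y$, giving sensitivity at most $2 d B_x B_y$; and the change in the scalar $DD^T$ is $d_i^2 - d_i'^2$, bounded by $B_y^2$ since $d_i^2, d_i'^2 \in [0, B_y^2]$. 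These are precisely the numerators in the Laplace scales $b_{xx}, b_{xy}, b_{yy}$ divided by $p_1\epsilon$, $p_2\epsilon$, $p_3\epsilon$ respectively, so by the Laplace mechanism~\cite{Dwork2006} the release of $S_{xx}$ is $p_1\epsilon$-differentially private, that of $S_{xy}$ is $p_2\epsilon$-differentially private, and that of $S_{yy}$ is $p_3\epsilon$-differentially private (the addition of the deterministic $CC^T$, $CD$, $DD^T$ to the noise is exactly what the Laplace-mechanism guarantee covers).

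Finally, since $P$, $Q$, $R$ are drawn independently, the joint release $(S_{xx}, S_{xy}, S_{yy})$ is the independent composition of these three mechanisms, so by sequential composition it is $(p_1 + p_2 + p_3)\epsilon = \epsilon$-differentially private; any downstream use of the output (fitting the model, forming predictions) is post-processing and preserves the guarantee. The one step I would be most careful about is the first sensitivity bound: one must use that only the $d(d+1)/2$ distinct entries of the symmetric matrix carry independent noise (matching $P_{ij} = P_{ji}$), and that each such entry can shift by $2B_x^2$ rather than $B_x^2$ because both the old and the new outer product contribute — mishandling either point would change the constant in $b_{xx}$ and break the argument.
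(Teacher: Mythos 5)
Your proof is correct and follows essentially the same route as the paper: the same three sensitivity bounds ($d(d+1)B_x^2$ for the $d(d+1)/2$ distinct entries of $CC^T$, $2dB_xB_y$ for $CD$, and $B_y^2$ for $DD^T$) followed by composition over the three independently perturbed statistics. The only cosmetic difference is that you invoke the Laplace mechanism once per statistic via its $\ell_1$-sensitivity, whereas the paper perturbs each coordinate of $S_{xx}$ as a separate $\frac{2p_1\epsilon}{d(d+1)}$-DP mechanism and composes over the $d(d+1)/2$ coordinates --- these are equivalent and yield the same noise scale.
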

\begin{proof}
(i) $S_{xx} = CC' + P$  is $p_1\epsilon$-differentially private. 

$S_{xx}$ is a symmetric $d \times d$ matrix with $\frac{d (d+1)}{2}$
degrees of freedom.  After \textproc{project} $| C |_\infty \le B_x$ and
the sensitivity of each element
$\Delta (S_{xx})_{ij} = \sup | c_i c_j - c_i' c_j' | \le 2 B_x^2$.
Adding Laplace distributed noise to $(S_{xx})_{ij}$ with
$b = \frac{d(d+1)B_x^2}{p_1\epsilon}$ yields an $\epsilon'$-DP
mechanism with $\epsilon' = \frac{2p_1\epsilon}{d(d+1)}$.
Using basic composition~\cite{Dwork2014} over the $\frac{d (d+1)}{2}$
independent dimensions shows that $S_{xx} = CC' + P$ is
$p_1\epsilon$-differentially private.

(ii) $CD$ is a $d  \times 1$ vector where $d$ is the cardinality of $\mbb{I}$ and each element of $CD$ is computed as follows:
\begin{eqnarray}
\forall i \in \mbb{I},\;\; CD_i=\sum_{j=1}^n C_{ij} D_j,
\end{eqnarray}
where $| C_{ij} | \leq B_x$ and $| D_j | \leq B_y$, and thus the
sensitivity of $CD$ is $2dB_x B_y$. Thus, $S_{xy} = CD + Q$ is
$p_2\epsilon$-differentially private.

(iii) $DD'$ is a scalar computed as
$$DD' = \sum_{j=1}^n D_j^2,$$
where $|D_j| \leq B_y$, and thus the sensitivity of $DD'$ is $B_y^2$. Thus, $S_{yy}=DD'+R$ is $p_3\epsilon$-differentially private.

Therefore, releasing $S_{xx}$, $S_{xy}$, and $S_{yy}$ together by
\textproc{DiffPriSS} is $\epsilon$-differentially private.
\end{proof}

\subsection{Asymptotic consistency and efficiency}

\begin{theorem}\label{thm:efficiency_of_lin_regression}
  Differentially private inference of the posterior mean of the weights
  of linear regression with Laplace mechanism to perturb
  the sufficient statistics is asymptotically consistent with respect
  to the posterior mean.
\end{theorem}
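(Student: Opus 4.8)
The plan is to mimic the structure of the proof of Theorem~\ref{thm:efficiency_of_gaussian_mean}, but now the perturbation enters in two places — the precision $\Lambda_{DP} = \Lambda_0 + \Lambda(n\widebar{xx} + \Delta)$ and the linear term $\Lambda(n\widebar{xy} + \delta) + \Lambda_0\beta_0$ — so we must control the difference $\mu_{DP} - \mu_{NP}$ when \emph{both} are noisy. First I would write, using Eq.~\eqref{eq:linreg_np} and Eq.~\eqref{eq:linreg_dp}, the telescoping identity
\begin{align*}
  \mu_{DP} - \mu_{NP}
  &= \Lambda_{DP}^{-1}\bigl(\Lambda(n\widebar{xy} + \delta) + \Lambda_0\beta_0\bigr)
     - \Lambda_{NP}^{-1}\bigl(\Lambda n\widebar{xy} + \Lambda_0\beta_0\bigr) \\
  &= \Lambda_{DP}^{-1}\Lambda\delta
     + \bigl(\Lambda_{DP}^{-1} - \Lambda_{NP}^{-1}\bigr)\bigl(\Lambda n\widebar{xy} + \Lambda_0\beta_0\bigr).
\end{align*}
The second term I would simplify with the resolvent identity $\Lambda_{DP}^{-1} - \Lambda_{NP}^{-1} = -\Lambda_{DP}^{-1}(\Lambda_{DP} - \Lambda_{NP})\Lambda_{NP}^{-1} = -\Lambda_{DP}^{-1}(\Lambda\Delta)\Lambda_{NP}^{-1}$, and note that $\Lambda_{NP}^{-1}(\Lambda n\widebar{xy} + \Lambda_0\beta_0) = \mu_{NP}$, which is $O(1)$ in $n$ (it converges to the true $\beta$ under standard conditions, and in any case stays bounded in probability). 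So the difference becomes $\Lambda_{DP}^{-1}\Lambda\delta - \Lambda_{DP}^{-1}(\Lambda\Delta)\mu_{NP}$.

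Next I would extract the $1/n$ scaling. Writing $\Lambda_{DP} = \Lambda^{1/2}\bigl(\Lambda^{-1/2}\Lambda_0\Lambda^{-1/2} + \Lambda^{1/2}(n\widebar{xx} + \Delta)\Lambda^{-1/2}\bigr)\Lambda^{1/2}$ — or more simply factoring out $n$ — the key point is that $\frac{1}{n}\Lambda_{DP} = \frac{1}{n}\Lambda_0 + \Lambda\widebar{xx} + \frac{1}{n}\Lambda\Delta$. Since $\widebar{xx} \to E[xx^T]$ a.s.\ (assumed positive definite, which is the natural nondegeneracy hypothesis, matching the implicit assumptions in the Gaussian-mean proof), and since $\frac{1}{n}\Delta \to 0$ in probability (each entry of $\Delta$ is a fixed Laplace variate independent of $n$), we get $\frac{1}{n}\Lambda_{DP} \to \Lambda E[xx^T] \succ 0$, hence $n\,\Lambda_{DP}^{-1} = O_p(1)$ with operator norm bounded by some constant $c$ for $n$ large enough, exactly as in Theorem~\ref{thm:efficiency_of_gaussian_mean}. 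Therefore
$$ \|\mu_{DP} - \mu_{NP}\| \le \frac{c}{n}\bigl(\|\Lambda\delta\| + \|\Lambda\Delta\|\,\|\mu_{NP}\|\bigr), $$
and both $\|\delta\|$ and $\|\Delta\|$ are $O_p(1)$ (sums of finitely many fixed-scale Laplace entries, noting that $\Delta$ is a $d\times d$ symmetric matrix with $d(d+1)/2$ such entries), while $\|\mu_{NP}\| = O_p(1)$. Consequently $\mathrm{Pr}\{\|\mu_{DP} - \mu_{NP}\| \ge \alpha\} \le \mathrm{Pr}\{\frac{c}{n}(\|\Lambda\delta\| + \|\Lambda\Delta\|\|\mu_{NP}\|) \ge \alpha\} \to 0$ for every $\alpha > 0$, which gives asymptotic consistency.

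The main obstacle — and the place I would be most careful — is the coupling between the noisy precision matrix $\Lambda_{DP}$ and the noise $\delta, \Delta$: unlike the Gaussian-mean case, $\Lambda_{DP}^{-1}$ is itself random and depends on $\Delta$, so one cannot simply treat it as a deterministic $O(1/n)$ factor multiplying an independent noise vector. The clean way around this is to argue on the high-probability event $\{\frac{1}{n}\Lambda_{DP} \succeq \frac12 \Lambda E[xx^T]\}$, whose probability tends to $1$, and bound $\|n\Lambda_{DP}^{-1}\|_{\mathrm{op}} \le c$ deterministically there; off that event the probability is negligible. A secondary point to state explicitly is that the analysis presumes the standard nondegeneracy condition that the population second moment $E[xx^T]$ is invertible (and that $\mu_{NP}$ is bounded in probability), which is implicit throughout the paper's asymptotic treatment; everything else is the same Gamma/Laplace tail bookkeeping already used above, and the $O(1/n)$ rate then follows exactly as in Theorem~\ref{thm:rate_gaussian_mean}.
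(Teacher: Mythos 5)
Your proposal is correct and follows essentially the same route as the paper: the same decomposition of $\mu_{DP}-\mu_{NP}$ into the $\Lambda_{DP}^{-1}\Lambda\delta$ term plus the difference-of-inverses term, the same $1/n$ rescaling to show $n\Lambda_{DP}^{-1}$ stays bounded under the nondegeneracy assumption on $\widebar{xx}$, and the same observation that the fixed-scale Laplace noises $\delta,\Delta$ are $\mathcal{O}_p(1)$. Your explicit use of the resolvent identity and the high-probability-event bookkeeping is in fact what the paper defers to its subsequent asymptotic-efficiency proof, so you are if anything slightly more careful than the paper's own consistency argument.
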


\begin{proof}
  Using Eqs.~(\ref{eq:linreg_np})--(\ref{eq:linreg_dp}) we can evaluate
  \begin{align*}
    \left\| \mu_{DP} - \mu_{NP} \right\|_1 
    &= \left\| \Lambda_{DP}^{-1} (\Lambda (n\widebar{xy} + \delta) + \Lambda_0 \beta_0)
       - \Lambda_{NP}^{-1} (\Lambda n\widebar{xy} + \Lambda_0 \beta_0) \right\|_1 \\
    &\le \left\| \Lambda_{DP}^{-1} (\Lambda (n\widebar{xy} + \delta) + \Lambda_0 \beta_0)
       - \Lambda_{DP}^{-1} (\Lambda n\widebar{xy} + \Lambda_0 \beta_0) \right\|_1\\
    &\phantom{\le } + \left\| \Lambda_{DP}^{-1} (\Lambda n\widebar{xy} + \Lambda_0 \beta_0)
       - \Lambda_{NP}^{-1} (\Lambda n\widebar{xy} + \Lambda_0 \beta_0) \right\|_1 \\
    &= \left\| \Lambda_{DP}^{-1} \Lambda \delta \right\|_1
      + \left\| (\Lambda_{DP}^{-1} - \Lambda_{NP}^{-1})
          (\Lambda n\widebar{xy} + \Lambda_0 \beta_0) \right\|_1 \\
    &= \left\| (\Lambda_0 + \Lambda (n\widebar{xx} + \Delta))^{-1} \Lambda \delta \right\|_1 \\
    &\phantom{=} + \big\| \big[ (\Lambda_0 + \Lambda (n\widebar{xx} + \Delta))^{-1} \\
    &\phantom{= + \big\| \big[ \;}
        - (\Lambda_0 + \Lambda (n\widebar{xx}))^{-1}\big] (\Lambda n\widebar{xy} + \Lambda_0 \beta_0) \big\|_1 \\
    &= \left\| (\Lambda_0 + \Lambda (n\widebar{xx} + \Delta))^{-1} \Lambda \delta \right\|_1\\
    &\phantom{=} + \bigg\| \bigg[ \left(\frac{1}{n}\Lambda_0 + \Lambda \left(\widebar{xx} + \frac{1}{n}\Delta\right)\right)^{-1} \\
    &\phantom{= + \bigg\| \bigg[ \;}
        - \left(\frac{1}{n}\Lambda_0 + \Lambda \widebar{xx}\right)^{-1}\bigg] \left(\Lambda \widebar{xy} + \frac{1}{n}\Lambda_0 \beta_0\right) \bigg\|_1.
  \end{align*}
  Assuming $\widebar{xx} > 0$, the first term clearly approaches 0 as
  $n \rightarrow \infty$.  For the second term, as
  $n \rightarrow \infty$,
  $(\frac{1}{n}\Lambda_0 + \Lambda (\widebar{xx} +
  \frac{1}{n}\Delta))^{-1} \rightarrow (\frac{1}{n}\Lambda_0 + \Lambda
  \widebar{xx})^{-1}$ and as
  $(\Lambda \widebar{xy} + \frac{1}{n}\Lambda_0 \beta_0)$ is bounded, the
  second term also approaches 0 as $n \rightarrow \infty$.  This shows
  that $\mu_{DP}$ converges in probability to $\mu_{NP}$.
\end{proof}

\begin{theorem}
  $\epsilon$-differentially private inference of the posterior mean of
  the weights of linear regression with the Laplace mechanism
  of Algorithm~\ref{alg:algos} to perturb the sufficient statistics
  is asymptotically efficiently private.
\end{theorem}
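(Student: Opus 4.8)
The plan is to upgrade the consistency argument of Theorem~\ref{thm:efficiency_of_lin_regression} to a $1/n$ rate, mirroring the step from Theorem~\ref{thm:efficiency_of_gaussian_mean} to Theorem~\ref{thm:rate_gaussian_mean}: for each $\alpha>0$ I will exhibit constants $C,N$ with $\mathrm{Pr}\{\|\mu_{DP}-\mu_{NP}\|_1>C/n\}<\alpha$ for all $n\ge N$ (the choice of norm is immaterial in finite dimension). The starting point is the two-term bound already derived in that proof,
\[
\|\mu_{DP}-\mu_{NP}\|_1 \;\le\; \left\|\Lambda_{DP}^{-1}\Lambda\delta\right\|_1 \;+\; \left\|\bigl(\Lambda_{DP}^{-1}-\Lambda_{NP}^{-1}\bigr)\bigl(\Lambda n\widebar{xy}+\Lambda_0\beta_0\bigr)\right\|_1 ,
\]
and I will bound each summand by $1/n$ times a random variable whose law does not depend on $n$, then optimise over its tail.

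For the first summand, factor $\Lambda_{DP}^{-1}=\tfrac1n\bigl(\tfrac1n\Lambda_0+\Lambda(\widebar{xx}+\tfrac1n\Delta)\bigr)^{-1}$. The entries of the privacy noise $\Delta$ are Laplace with scale independent of $n$, so $\tfrac1n\Delta\to0$ in probability; under the mild regularity assumption (already implicit in Theorem~\ref{thm:efficiency_of_lin_regression}) that $\widebar{xx}$ stays in a fixed compact subset of the positive-definite cone along the data sequence, the bracketed matrix converges to $\Lambda\widebar{xx}\succ0$. Continuity of matrix inversion then yields a constant $c$ and an index $N_1$ with $\bigl\|\bigl(\tfrac1n\Lambda_0+\Lambda(\widebar{xx}+\tfrac1n\Delta)\bigr)^{-1}\Lambda\bigr\|_{\mathrm{op}}\le c$ for all $n\ge N_1$ on an event of probability at least $1-\alpha/3$; on that event the first summand is at most $\tfrac{c}{n}\|\delta\|_1$, and $\|\delta\|_1=\sum_{j=1}^d|\delta_j|\sim\mathrm{Gamma}(d,1/b_{xy})$ with $b_{xy}$ independent of $n$.

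For the second summand I will use the resolvent identity with $A=\tfrac1n\Lambda_0+\Lambda(\widebar{xx}+\tfrac1n\Delta)$ and $B=\tfrac1n\Lambda_0+\Lambda\widebar{xx}$, so that $B-A=-\tfrac1n\Lambda\Delta$ and $\Lambda_{DP}^{-1}-\Lambda_{NP}^{-1}=\tfrac1n(A^{-1}-B^{-1})=-\tfrac1{n^2}A^{-1}(\Lambda\Delta)B^{-1}$. Since $\Lambda n\widebar{xy}+\Lambda_0\beta_0=n\bigl(\Lambda\widebar{xy}+\tfrac1n\Lambda_0\beta_0\bigr)$, the second summand equals $\tfrac1n\bigl\|A^{-1}(\Lambda\Delta)B^{-1}\bigl(\Lambda\widebar{xy}+\tfrac1n\Lambda_0\beta_0\bigr)\bigr\|_1$. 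After the projection every coordinate of $\widebar{xy}$ has absolute value at most $B_xB_y$, so the trailing vector is uniformly bounded; $\|B^{-1}\|_{\mathrm{op}}$ is bounded for large $n$ (a deterministic statement under the same data assumption), $\|A^{-1}\Lambda\|_{\mathrm{op}}\le c$ on the same event as before, and the operator norm of $\Delta$ is dominated by $\sum_{i\le j}|P_{ij}|$, again with a law free of $n$. Hence the second summand is at most $\tfrac{c'}{n}$ times a fixed random variable.

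Putting these together, on an event of probability at least $1-\alpha/3$ and for all $n$ beyond the larger of the two thresholds one obtains $\|\mu_{DP}-\mu_{NP}\|_1\le\tfrac1n\bigl(c\|\delta\|_1+c'\|\Delta\|_{\mathrm{op}}\bigr)$, where the bracketed variable is sub-exponential with a distribution not depending on $n$; choosing $C$ so that $\mathrm{Pr}\{c\|\delta\|_1+c'\|\Delta\|_{\mathrm{op}}>C\}<\alpha/3$ and taking a union bound gives $\mathrm{Pr}\{\|\mu_{DP}-\mu_{NP}\|_1>C/n\}<\alpha$ for all sufficiently large $n$, which is exactly asymptotic efficiency. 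I expect the main obstacle to be precisely the coupling in the first step: the noise $\Delta$ injected into $n\widebar{xx}$ sits inside $\Lambda_{DP}^{-1}$, so the ``constant'' multiplying $\|\delta\|_1$ is itself random, and the real work is to make it deterministically bounded uniformly over $n\ge N$ with probability close to one---which is where continuity of inversion at the nonsingular limit $\Lambda\widebar{xx}$ and $\tfrac1n\Delta\to0$ in probability come in.
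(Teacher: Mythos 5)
Your proposal is correct and follows essentially the same route as the paper's proof: the same two-term decomposition from Theorem~\ref{thm:efficiency_of_lin_regression}, the same resolvent identity for the difference of posterior precisions, the same Gamma tail bounds on $\|\delta\|_1$ and on the norm of $\Delta$, and a union bound over the constituent events. Your explicit treatment of the randomness of the ``constant'' coming from $\Delta$ inside $\Lambda_{DP}^{-1}$ is a welcome clarification of a step the paper passes over with ``valid for any $c_1>1$ as $n$ gets large enough,'' but it does not change the argument.
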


\begin{proof}
From the proof of Theorem~\ref{thm:efficiency_of_lin_regression} we have
\begin{multline}
  \left\| \mu_{DP} - \mu_{NP} \right\|_1 
    \le \left\| (\Lambda_0 + \Lambda (n\widebar{xx} + \Delta))^{-1} \Lambda \delta \right\|_1\\
    + \left\| \left[ \left(\frac{1}{n}\Lambda_0 + \Lambda \left(\widebar{xx} + \frac{1}{n}\Delta\right)\right)^{-1}
        - \left(\frac{1}{n}\Lambda_0 + \Lambda \widebar{xx}\right)^{-1}\right] \left(\Lambda \widebar{xy} + \frac{1}{n}\Lambda_0 \beta_0 \right) \right\|_1.
  \label{eq:lin_regression_errorbound}
\end{multline}

The first term can be bounded easily as
\begin{align}
  \left\| (\Lambda_0 + \Lambda (n\widebar{xx} + \Delta))^{-1} \Lambda \delta \right\|_1
  &= \left\| (\Lambda^{-1} \Lambda_0 + \Delta + n\widebar{xx})^{-1} \delta \right\|_1 \nonumber\\
  &\le \left\| (\Lambda^{-1} \Lambda_0 + \Delta + n\widebar{xx})^{-1} \right\|_1 \| \delta \|_1 \nonumber\\
  &\le \frac{c_1}{n} \left\| (\widebar{xx})^{-1} \right\|_1 \| \delta \|_1
  \label{eq:linreg_bound1}
\end{align}
where $c_1 > 1$.  The bound is valid for any $c_1 > 1$ as $n$ gets large
enough.

Similarly as in the proof of Theorem~\ref{thm:rate_gaussian_mean},
\begin{equation}
  \label{eq:linreg_delta}
  \| \delta \|_1 \sim \mathrm{Gamma}\left(d, \frac{\epsilon}{4 d B_x B_y}\right).
\end{equation}
Given $\alpha > 0$ we can choose similarly as in the proof of
Theorem~\ref{thm:rate_gaussian_mean}
\begin{equation*}
  C_1 > c_1 F^{-1}(1-\alpha/2; d, \epsilon/(4dB_x B_y)) \left\| (\widebar{xx})^{-1} \right\|_1,
\end{equation*}
where $F^{-1}(x; \alpha, \beta)$ is the inverse distribution function of
the Gamma distribution with shape $\alpha$ and rate $\beta$,
to ensure that 
\begin{equation}
  \label{eq:linreg_prob1}
  \mathrm{Pr}\left\{\| (\Lambda_0 + \Lambda (n\widebar{xx} + \Delta))^{-1} \Lambda \delta \|_1 > \frac{C_1}{n} \right\} 
  < \frac{\alpha}{2}.
\end{equation}

The second term can be bounded as
\begin{multline*}
  \left\| \left[ \left(\frac{1}{n}\Lambda_0 + \Lambda \left(\widebar{xx} + \frac{1}{n}\Delta\right)\right)^{-1}
      - \left(\frac{1}{n}\Lambda_0 + \Lambda \widebar{xx}\right)^{-1}\right] \left(\Lambda \widebar{xy} + \frac{1}{n}\Lambda_0 \beta_0\right) \right\|_1 \\
  = \left\| \left[ \left(\frac{1}{n}\Lambda^{-1} \Lambda_0 + \widebar{xx} + \frac{1}{n}\Delta\right)^{-1}
      - \left(\frac{1}{n}\Lambda^{-1}\Lambda_0 + \widebar{xx}\right)^{-1}\right] \left(\widebar{xy} + \frac{1}{n}\Lambda^{-1}\Lambda_0 \beta_0\right) \right\|_1 \\
  = \frac{1}{n} \left\| \left(\frac{1}{n}\Lambda^{-1} \Lambda_0 + \widebar{xx} + \frac{1}{n}\Delta\right)^{-1} \Delta \left(\frac{1}{n}\Lambda^{-1}\Lambda_0 + \widebar{xx}\right)^{-1} \left(\widebar{xy} + \frac{1}{n}\Lambda^{-1}\Lambda_0 \beta_0\right) \right\|_1\\
  \le \frac{1}{n} \left\| \left(\frac{1}{n}\Lambda^{-1} \Lambda_0 + \widebar{xx} + \frac{1}{n}\Delta\right)^{-1} \Delta \left(\frac{1}{n}\Lambda^{-1}\Lambda_0 + \widebar{xx}\right)^{-1} \right\|_1 \left\| \widebar{xy} + \frac{1}{n}\Lambda^{-1}\Lambda_0 \beta_0 \right\|_1 \\
  \le \frac{1}{n} \left\| \left(\frac{1}{n}\Lambda^{-1} \Lambda_0 + \widebar{xx} + \frac{1}{n}\Delta\right)^{-1} \right\|_1
  \left\| \Delta \right\|_1 \\
  \left\| \left(\frac{1}{n}\Lambda^{-1}\Lambda_0 + \widebar{xx}\right)^{-1} \right\|_1
  \left\| \widebar{xy} + \frac{1}{n}\Lambda^{-1}\Lambda_0 \beta_0 \right\|_1 \\
  \le \frac{c_2}{n} \left\| \left(\widebar{xx}\right)^{-1} \right\|_1
  \left\| \Delta \right\|_1
  \left\| \left(\widebar{xx}\right)^{-1} \right\|_1
  \left\| \widebar{xy} \right\|_1 =: \frac{c_2}{n} \mathcal{B}_2,
\end{multline*}
where similarly as in Eq.~\eqref{eq:linreg_bound1}, the bound is valid
for any $c_2>1$ as $n$ gets large enough.
Here $\|\Delta\|_1$ is the $l_1$-norm of the matrix $\Delta$ that
whose elements follow the Laplace distribution
$\Delta_{ij} \sim \mathrm{Laplace}(0, \frac{2d(d+1)B_x^2}{\epsilon})$.
We can bound it as
\begin{equation*}
  \| \Delta \|_1 = \max_i \| \Delta_{:i} \|_1,
\end{equation*}
where $\Delta_{:i}$ are the row vectors of $\Delta$ and the latter is
the vector $\ell_1$-norm.  Similarly as in Eq.~\eqref{eq:linreg_delta}
we have
\begin{equation}
  \label{eq:linreg_delta2}
  \| \delta \|_1 \sim \mathrm{Gamma}\left(d, \frac{\epsilon}{2 d (d+1) B_x^2}\right)
\end{equation}
and as above given $\alpha > 0$ we can choose
\begin{equation*}
  C_2 > c_2 F^{-1}(1-\alpha/2; d, \epsilon/(2d(d+1)B_x^2)) \left\| \left(\widebar{xx}\right)^{-1} \right\|_1^2 \left\| \widebar{xy} \right\|_1,
\end{equation*}
where $F^{-1}(x; \alpha, \beta)$ is the inverse distribution function of
the Gamma distribution to ensure that
\begin{equation}
  \label{eq:linreg_prob2}
  \mathrm{Pr}\left\{ \mathcal{B}_2 > \frac{C_2}{n} \right\} 
  < \frac{\alpha}{2}.
\end{equation}

Combining Eqs.~(\ref{eq:linreg_prob1}) and (\ref{eq:linreg_prob2})
shows that
\begin{equation}
  \label{eq:linreg_efficiency}
  \mathrm{Pr}\left\{ \left\| \mu_{DP} - \mu_{NP} \right\|_1 > \frac{C_1 + C_2}{n} \right\} < \alpha.
\end{equation}
\end{proof}

\subsection{Convergence rate}

Using Chebysev's inequality together with Eq.~\eqref{eq:linreg_delta}
we can show that with high probability
\begin{equation*}
  \| \delta \|_1 = \mathcal{O}\left(\frac{d^2 B_x B_y}{\epsilon}\right)
\end{equation*}
and thus
\begin{equation}
  \left\| (\Lambda_0 + \Lambda (n\widebar{xx} + \Delta))^{-1} \Lambda \delta \right\|_1
  = \mathcal{O}\left(\frac{d^2 B_x B_y \left\| (\widebar{xx})^{-1} \right\|_1}{n \epsilon}\right).
  \label{eq:lin_regression_bound1}
\end{equation}

Similarly for the second term we obtain
\begin{equation}
  \mathcal{B}_2 = \mathcal{O}\left( \frac{d^3B_x^2 \left\| \left(\widebar{xx}\right)^{-1} \right\|_1^2 \left\| \widebar{xy} \right\|_1}{\epsilon}  \right).
  \label{eq:lin_regression_bound2}
\end{equation}
Combining
Eqs.~(\ref{eq:lin_regression_errorbound})--(\ref{eq:lin_regression_bound2})
yields
\begin{equation*}
  \left\| \mu_{DP} - \mu_{NP} \right\|_1 
  = \mathcal{O}\left( \frac{d^2 B_x B_y \|\widebar{xx}^{-1}\|_1 
      + d^3 B_x^2 \left\| \left(\widebar{xx}\right)^{-1} \right\|_1^2 \left\| \widebar{xy} \right\|_1}{n \epsilon} \right)
\end{equation*}
with high probability.

\begin{figure}[htb]
  \centering
  \centerline{\includegraphics{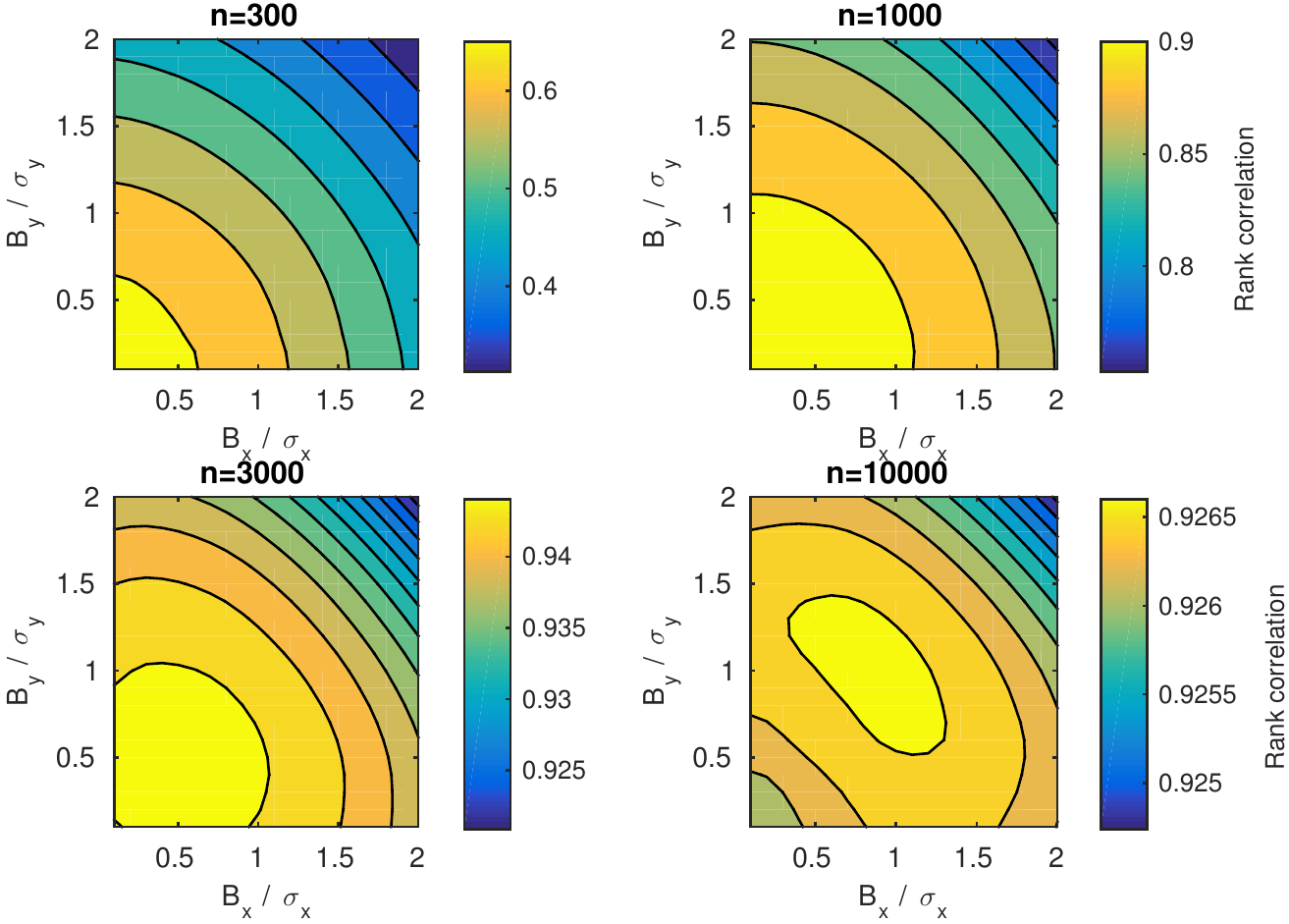}}
  \caption{Illustration of the effect of projecting the outliers in
    linear regression, for different sample sizes $n$ with 10-dimensional
    synthetic data, evaluated by Spearman's rank correlation between the
    predicted and true values. The $x$ and $y$ axes denote the projection
    thresholds as
    a function of standard deviations of data. Top right corner
    illustrates projection threshold at 2 standard deviations,
    no outlier projection would
    be further to top right. Higher
    values (yellow) are better. The result
    illustrates a clear benefit from the projection for moderate
    sample sizes, but the benefit decreases for really large sample
    sizes.}
  \label{fig:clipping_effect}
\end{figure}

\begin{figure}
\centering
\centerline{\includegraphics{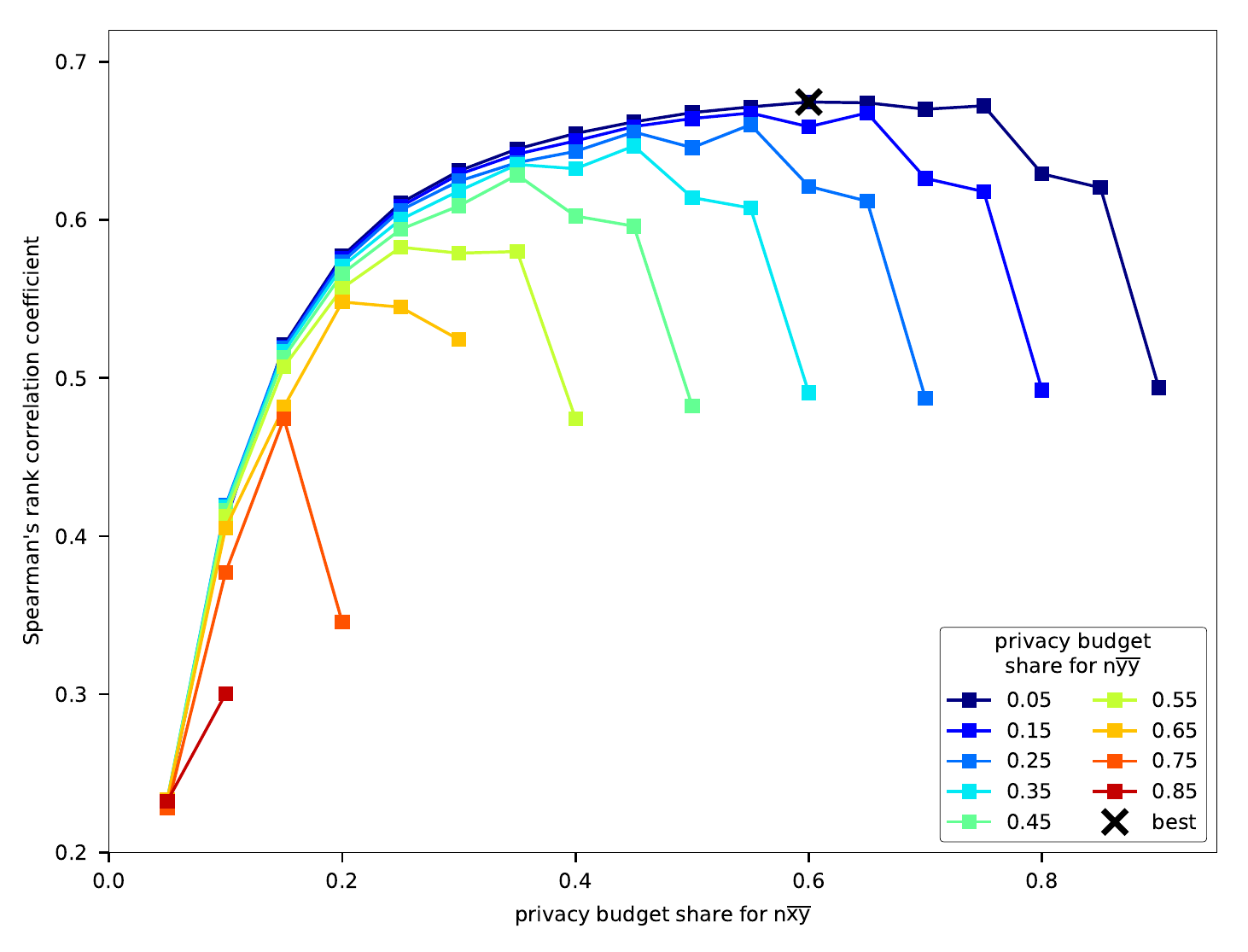}}
\caption{\textbf{Optimal privacy budget split between sufficient statistics.} 
  Accuracy on a synthetic data set improves as a bigger proportion 
  of the fixed privacy budget is assigned for $n\protect\widebar{xy}$. 
  The best performance is achieved by assigning term $n\protect\widebar{yy}$ 
  the smallest proportion 5\%, term $n\protect\widebar{xy}$ a large 60\% 
  proportion, and term $n\protect\widebar{xx}$ the remaining 35\% proportion 
  of the privacy budget.}
\label{fig:budgetsplit}
\end{figure}

\begin{figure}
\centering
\centerline{\includegraphics{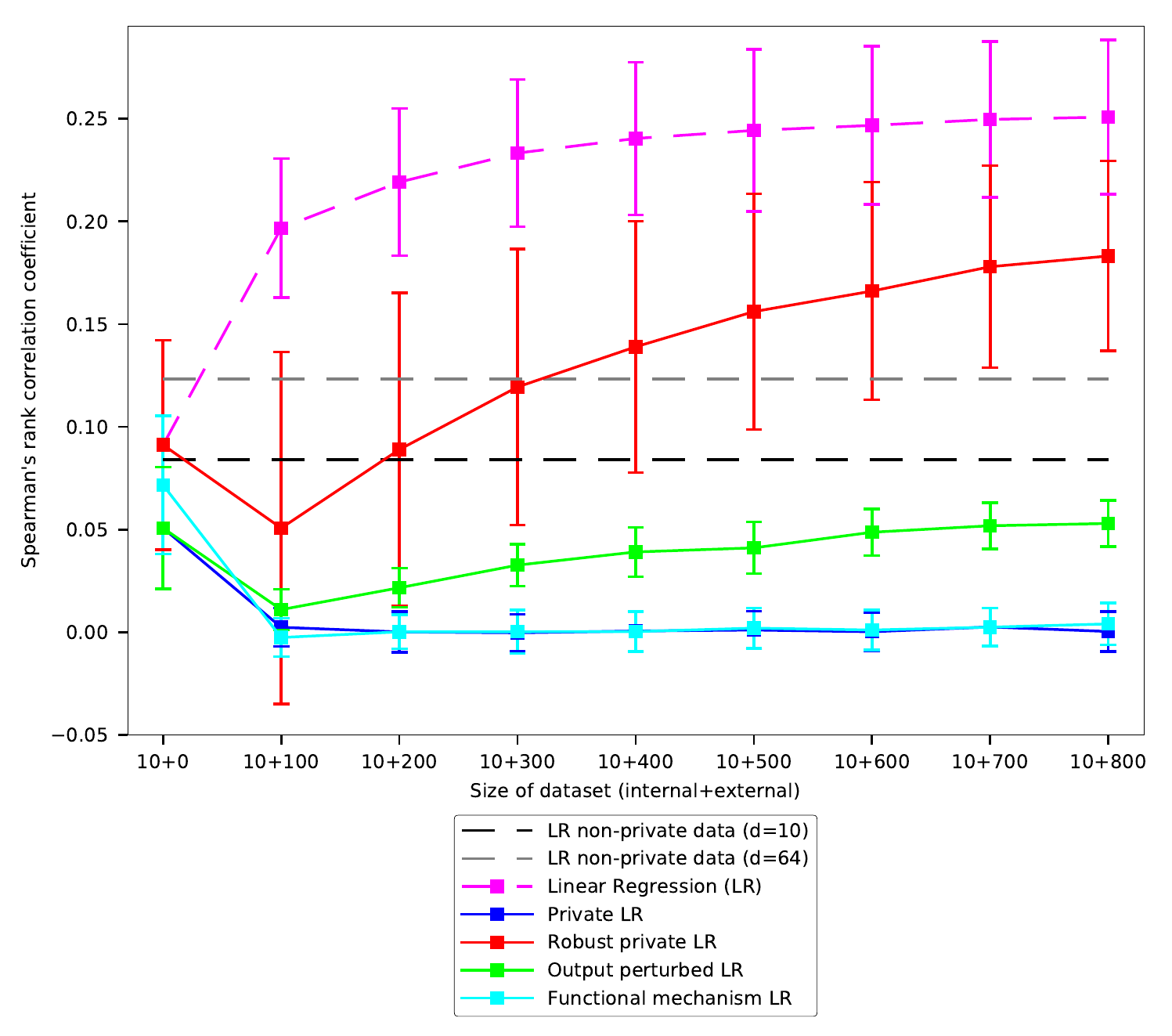}}
\caption{This is a complement to Figure \ref{fig:curves} with more
  stringent privacy. Here we show Spearman's rank correlation
  coefficients ($\rho$)
  between the measured ranking of the cell lines and the ranking
  predicted by the models using $\epsilon=1$.
  The baselines (horizontal dashed lines) are learned on 10
  non-private data points; the
  private algorithms additionally have privacy-protected data
  (x-axis). The non-private algorithm (LR) has the same amount of
  additional non-privacy-protected data.
  All methods use 10-dimensional data except purple baseline
  showing the best performance with 10 non-private data points.
  The results are averaged
  over all drugs and 50-fold Monte Carlo cross-validation; error bars
  denote standard deviation over 50 Monte Carlo repeats.
  The result shows that more data are needed for good prediction
  performance under more stringent privacy.}
\label{fig:curves_eps1}
\end{figure}

\end{document}